\lstdefinelanguage{Isabelle}
{
	basicstyle=\ttfamily\small,
	keywords=[1]{
		definition, lemma, using, by, unfolding, named_theorems, declare, consts, axiomatization, typedecl, type_synonym, oops, theorem, abbreviation, begin, theory, imports, end, sledgehammer
	},
	keywordstyle=[1]\bfseries,
	keywords=[2]{where, infixr, assumes, shows, and},
	keywordstyle=[2]\bfseries,
	keywords=[3]{v, p, r},
	keywordstyle=[3],
	keywords=[4]{add},
	keywordstyle=[4],
	sensitive=false,
	morestring=[b]',
	morecomment=[l]{--}
}
\begin{document}

\theoremstyle{definition}
\newtheorem{definition}{Definition}[section]
\newtheorem{theorem}{Theorem}[section]
\newtheorem{corollary}{Corollary}[theorem]
\newtheorem{lemma}[theorem]{Lemma}

\newcommand{\logikey}{\textsc{LogiKEy}}

\title{\LARGE Automating Public Announcement Logic \\ with Rela\-tivized Common Knowledge \\ as a Fragment of HOL in LogiKEy}

\author{%
\name{Christoph Benzmüller}
\address{University of Bamberg, AI Systems Engineering, Bamberg, Germany \&
FU Berlin, Dep.~of Mathematics and Computer Science, Berlin, Germany}
\email{c.benzmueller@fu-berlin.de}
\and
\name{Sebastian Reiche\thanks{Corresponding Author}}
\address{FU Berlin, Dep.~of Mathematics and Computer Science, Berlin, Germany}
\email{sebastian.reiche@fu-berlin.de}
}

\abstract{A shallow semantical embedding for public announcement logic with relativized common knowledge is presented. This embedding enables the first-time automation of this logic with off-the-shelf theorem provers for classical higher-order logic. It is demonstrated (i) how meta-theoretical studies can be automated this way, and (ii) how non-trivial reasoning in the target logic (public announcement logic), required e.g.~to obtain a convincing encoding and automation of the wise men puzzle, can be realized.

Key to the presented semantical embedding is that evaluation domains are modeled explicitly and treated as an additional parameter in the encodings of the constituents of the embedded target logic;
in previous related works, e.g.~on the embedding of normal modal logics, evaluation domains were implicitly shared between meta-logic and target logic.

The work presented in this article constitutes an important addition to the pluralist \logikey\ knowledge engineering methodology, which enables experimentation with logics  and their  combinations,  with  general and  domain  knowledge,  and  with  concrete use cases — all at the same time.
}

\date{\today}

\keywords{Public announcement logic; Relativized common knowledge;  Semantical embedding; Higher-order logic; Proof automation}

\maketitle

\section{Introduction}
\label{sec1}

Previous work has studied the application of a  universal (meta-)logical reasoning approach \cite{J41,J44} for solving  a prominent riddle in epistemic reasoning, known as the \textit{wise men puzzle}, on the computer \cite{J44}.
The solution presented there puts a particular emphasis on the adequate modeling of (ordinary) common knowledge and it also illustrates the elegance and the practical relevance of shallow\footnote{Shallow semantical embeddings are different from \emph{deep embeddings} of an object logic. In the latter case the syntax of the object logic is represented using an inductive data structure (e.g., following the definition of the language).
The semantics of a formula is then evaluated by recursively traversing the data structure, and additionally a proof theory for the logic may be encoded. Deep embeddings typically require technical inductive proofs, which hinder proof automation, that can be avoided when shallow
semantical embeddings are used instead.
For more information on shallow and deep embeddings we refer to the literature \cite{DeepShallow,DeepShallow2}.} semantical embeddings (SSEs; cf.~\cite{J23,J41}) of non-classical `object' logics in classical higher-order logic (HOL, aka Church's simple type theory; cf.~\cite{J43}), when being utilized within modern proof assistant systems such as Isabelle/HOL \cite{isabelle}.
However, this work nevertheless falls short, since it did not convincingly  address the interaction dynamics between the involved agents.
To this end, we extend and adapt in this article the universal (meta-)logical reasoning approach for \textit{public announcement logic}, and we demonstrate how it can be utilized to achieve a convincing encoding and automation of the wise men puzzle in Isabelle/HOL, so that also the interaction dynamics as given in the scenario is adequately addressed.
In more general terms, we present the first automation of public announcement logic (PAL) with relativized common knowledge, and we demonstrate that, and how, this logic can be seen and handled as a fragment of HOL. 
Key to the presented extension of the shallow semantical embedding approach is that the evaluation domains of the embedded target logic (PAL with relativized common knowledge) are no longer implicitly shared with the meta-logic HOL, and are instead now explicitly modeled as an additional parameter in the encoding of the embedded logics constituents.
We expect this approach of dynamizing shallow semantic embeddings of object logics in HOL to be applicable and adaptable to a wide spectrum of related works; cf.~\cite{sep-dynamic-epistemic,van2007dynamic} and the references therein.

The work presented in this article constitutes an important addition to the pluralist \logikey\ approach and methodology \cite{J48,J53}. \logikey's unifying formal framework is fundamentally based on SSEs of `object' logics (and their combinations) in HOL, enabling the
provision of powerful tool support \cite{B5}: off-the-shelf
theorem provers and model finders for HOL (as provided in Isabelle/HOL) are assisting the \logikey\
knowledge engineer to \textit{flexibly experiment} with underlying
logics and their combinations, with general and
domain knowledge, and with concrete use cases---all at the same
time. Continuous improvements of these off-the-shelf provers, without further ado, boost the reasoning performance in \logikey. 
Of course, specific PAL theorem provers (e.g.~\cite{DBLP:conf/tableaux/BalbianiDHL07}) are still expected to outperform the generic reasoning tools in \logikey. However, both approaches can be merged in the future and specialist provers can/should be added, e.g., as oracles to the LogiKEy framework. Regarding flexibility, there is clearly an advantage on the side of the \logikey\ approach, and it should actually be straightforward to adapt the SSE we present in this article (in future work) also for \textit{arbitrary announcement} operators as provided in APAL \cite{DBLP:conf/tableaux/BalbianiDHL07} or to DEL with action models \cite{sep-dynamic-epistemic}.

This article is structured as follows:
\textsection 2 briefly recaps HOL (Church's simple type theory), and \textsection 3 sketches PAL with relativized common knowledge. 
The main contribution of this article is presented in \textsection 4: a shallow semantical embedding of PAL with relativized common knowledge in HOL. 
The soundness of this embedding is subsequently proved in \textsection 5. Automation aspects are studied
in 
\textsection 6. This includes meta-level reasoning and completeness studies (in \textsection 6.1), the exploration of failures of uniform substitution (in \textsection 6.2), and finally an application of our framework to obtain an adequate modeling and automation of the prominent
 wise men puzzle (in \textsection 6.3).
\textsection 7 discusses related work and \textsection 8 concludes the article.

This article extends and improves our prior paper \cite{C90} in various respects. In addition to an overall improved presentation, we add a soundness proof, we show how completeness can be ensured, we modularize our embedding and its encoding in Isabelle/HOL, and we further improve it by parameterizing the notions of shared and distributed knowledge over arbitrary groups of agents; moreover, we now prove the wise men puzzle automatically also for four agents.

\section{Classical Higher-Order Logic}
\label{sec2}
We briefly recap classical higher-order logic (HOL), respectively Church's  \textit{simple theory of types} \cite{church1940,J43}, which is a logic defined on top of the simply typed lambda calculus.
The presentation is partly adapted from  Benzm\"uller~\cite{J31}. For further information on the syntax and semantics of HOL we refer to \cite{J6}.

\paragraph{Syntax of HOL.}
We start out with defining 
the set $\mathcal{T}$ of \textit{simple types} by the following abstract grammar:
$\alpha , \beta := o \ | \ i \ | \ (\alpha \rightarrow \beta)$.
Type $o$ denotes a bivalent set of truth values, containing \textit{truth} and \textit{falsehood}, and $i$ denotes a non-empty set of individuals.\footnote{In this article, we will actually associate type $i$ later on with the domain of possible worlds.} Further base types are optional.
$\rightarrow$ is the function type constructor, such that $(\alpha \rightarrow \beta) \in \mathcal{T}$ whenever $\alpha, \beta \in \mathcal{T}$. We may generally omit parentheses.
	
The \textit{terms} of HOL are defined  by the following abstract grammar:
$$s,t := p_\alpha \ | \ X_\alpha \ | \ (\lambda X_\alpha s_\beta)_{\alpha \rightarrow \beta} \ | \ (s_{\alpha\rightarrow\beta}t_\alpha)_\beta$$
where $\alpha, \beta, o \in \mathcal{T}$. The $p_\alpha \in C_\alpha$ are typed constants  and the $X_\alpha \in V_\alpha$ are typed variables (distinct from the $p_\alpha$).
If $s_{\alpha \rightarrow \beta}$ and $t_\alpha$ are HOL terms of types $\alpha \rightarrow \beta$ and $\alpha$, respectively, then $(s_{\alpha \rightarrow \beta}t_\alpha)_\beta$, called \textit{application}, is an HOL term of type $\beta$.
If $X_\alpha \in V_\alpha$ is a typed variable symbol and $s_\beta$ is an HOL term of type $\beta$, then $(\lambda X_\alpha s_\beta)_{\alpha \rightarrow \beta}$, called \textit{abstraction}, is an HOL term of type $\alpha \rightarrow \beta$.
The type of each term is given as a subscript (type subscripts, however, are often omitted if they are obvious in context).
We call terms of type $o$ \textit{formulas}.\footnote{HOL formulas should not be confused with the PAL formulas to be defined in \S\ref{sec:pal}; PAL formulas will later be identified in \S\ref{sec:embedding} with HOL predicates of type $(i\rightarrow o) \rightarrow i\rightarrow o$.}
As \textit{primitive logical connectives} we choose $\neg_{o \rightarrow o}, \vee_{o\rightarrow o\rightarrow o}$, $=_{\alpha \rightarrow \alpha \rightarrow \alpha }$ and $\Pi_{(\alpha \rightarrow o) \rightarrow o}$.
Other logical connectives can be introduced as abbreviations; e.g. $\longrightarrow_{o\rightarrow o\rightarrow o} = \lambda X_o \lambda Y_o \neg X \vee Y$.

\paragraph{Semantics of HOL.}
A \textit{frame} $\mathcal{D}$ for HOL is a collection $\{\mathcal{D}_\alpha\}_{\alpha \in T}$ of nonempty sets $\mathcal{D}_\alpha$, such that $\mathcal{D}_o = \{T, F\}$ (for true and false).
$\mathcal{D}_i$ is chosen freely and $\mathcal{D}_{\alpha \rightarrow \beta}$ are collections of functions mapping $\mathcal{D}_\alpha$ into $\mathcal{D}_\beta$.

A \textit{model} for HOL is a tuple $\mathcal{M} = \langle \mathcal{D}, I \rangle$, where $\mathcal{D}$ is a frame, and $I$ is a family of typed interpretation functions mapping constant symbols $p_\alpha \in C_\alpha$ to appropriate elements of $\mathcal{D}_\alpha$, called the \textit{denotation} of $p_\alpha$.
The logical connectives $\neg, \vee, \Pi$ and $=$ are always given their expected standard denotations:
\[\begin{tabular}{lll}
	$I(\neg_{o\rightarrow o})$ & = \textit{not} $\in \mathcal{D}_{o \rightarrow o}$ &s.t. \textit{not}(T) = F and \textit{not}(F) = T\\
	$I(\vee_{o \rightarrow o \rightarrow o})$ & = \textit{or} $\in \mathcal{D}_{o \rightarrow o \rightarrow o}$ &s.t. \textit{or}(a,b) = T iff (a = T or b = T) \\
	$I(=_{\alpha \rightarrow \alpha \rightarrow o})$ & = \textit{id} $\in \mathcal{D}_{\alpha \rightarrow \alpha \rightarrow o}$ &s.t. for all a,b $\in \mathcal{D}_\alpha$, \textit{id}(a,b) = T\\\
	&&\quad iff a is identical to b\\
	$I(\Pi_{(\alpha \rightarrow o) \rightarrow o})$ & = \textit{all} $\in \mathcal{D}_{(\alpha \rightarrow o) \rightarrow o}$ & s.t. for all $s \in \mathcal{D}_{\alpha \rightarrow o}$, \textit{all}(s) = T\\
	&&\quad iff s(a) = T for all a $\in \mathcal{D}_\alpha$ 
\end{tabular}\]
A \textit{variable assignment g} maps variables $X_\alpha$ to elements in $\mathcal{D}_\alpha$.
$g[d/W]$ denotes the assignment that is identical to $g$, except for variable $W$, which is now mapped to $d$.
	
The \textit{denotation} $\llbracket s_\alpha\rrbracket^{M,g}$ of an HOL term $s_\alpha$ on a model $\mathcal{M} = \langle \mathcal{D}, I \rangle$ under assignment $g$ is an element $d \in \mathcal{D}_\alpha$ defined in the following way:

\[
	\begin{tabular}{lll}
		$\llbracket p_\alpha\rrbracket^{\mathcal{M}, g}$ & = & $I(p_\alpha)$\\
		$\llbracket X_\alpha\rrbracket^{\mathcal{M}, g}$ & = & $g(X_\alpha)$\\
		$\llbracket (s_{\alpha \rightarrow \beta}t_\alpha)_\beta\rrbracket^{\mathcal{M}, g}$ & = & $\llbracket s_{\alpha \rightarrow \beta}\rrbracket^{\mathcal{M}, g}(\llbracket t_\alpha\rrbracket^{\mathcal{M}, g})$\\
		$\llbracket (\lambda X_\alpha s_\beta)_{\alpha \rightarrow \beta}\rrbracket^{\mathcal{M}, g}$ & = & the function $f$ from $\mathcal{D}_\alpha$ to $\mathcal{D}_\beta$\\
		&&\quad s.t. $f(d) = \llbracket s_\beta \rrbracket^{\mathcal{M},g[d/X_\alpha]}$ for all $d \in \mathcal{D}_\alpha$
	\end{tabular}
\]

In a \textit{standard model} a domain $\mathcal{D}_{\alpha \rightarrow \beta}$ is defined as the set of all total functions from $\mathcal{D}_\alpha$ to $\mathcal{D}_\beta$, i.e. $\mathcal{D}_{\alpha \rightarrow \beta} = \{ f \ |\ f : \mathcal{D}_\alpha \rightarrow \mathcal{D}_\beta \}$. 
In a \textit{Henkin model} (or general model)~\cite{henkin1950} function spaces are not necessarily required to be the full set of functions: $\mathcal{D}_{\alpha \rightarrow \beta} \subseteq \{ f \ |\ f : \mathcal{D}_\alpha \rightarrow \mathcal{D}_\beta \}$.  However, we require that the valuation function remains total, so that every term denotes. %
	
A HOL formula $s_o$ is \textit{true in a Henkin model $\mathcal{M}$ under assignment $g$} if and only if $\llbracket s_o\rrbracket^{\mathcal{M},g}=T$; also denoted by $\mathcal{M},g \models^\texttt{HOL} s_o$.
A HOL formula $s_o$ is called \textit{valid in $\mathcal{M}$}, denoted by $\mathcal{M} \models^\texttt{HOL} s_o$, iff $\mathcal{M},g \models^\texttt{HOL}s_o$ for all assignments $g$.
Moreover, a formula $s_o$ is called \textit{valid}, denoted by $\models^\texttt{HOL}s_o$, if and only if $s_o$ is valid in all Henkin models $\mathcal{M}$.

Due to G\"odel \cite{goedel1931} a sound and complete mechanization of HOL with standard semantics cannot be achieved. For HOL with Henkin semantics sound and complete calculi exist; cf.~e.g.~\cite{J6,B5} and the references therein.

Each standard model is obviously also a Henkin model.
Consequently, when a HOL formula is Henkin-valid, it is also valid in all standard models.

\section{Public Announcement Logic} \label{sec:pal}

The most important concepts and definitions of a \textit{public announcement logic (PAL) with relativized common knowledge} are depicted.
For more details, we refer to the literature \cite{van2007dynamic, pacuit2013dynamic}.

Before exploring these definitions, some general descriptions of the modeling approach are in order.
We use a graph-theoretical structure, called \textit{epistemic models}, to represent knowledge.
Epistemic models describe situations in terms of possible worlds.
A world represents one possibility about how the current situation can be.
At each world an agent considers all other reachable worlds (within the given equivalence class) as possible.
Each world in this set of possibilities needs to be consistent with the information the agent has.
Knowledge is described using an (binary) accessibility relation between worlds, rather than directly representing the agent's information.
A relation between worlds expresses that the agent is unable to tell which world is the one that represents the ``real'' situation.

Let $\mathcal{A}$ be a set of agents and $\mathcal{P}$ a set of atomic propositions.
Atomic propositions are intended to describe ground facts.
We use a set $W$ to denote possible worlds and a valuation function $V:\mathcal{P}\rightarrow \wp (W)$ that assigns a set of worlds to each atomic proposition. Vice versa, we may identify each world with the set of propositions that are true in them.

\begin{definition}[\textbf{Epistemic Model}]
	Let $\mathcal{A}$ be a (finite) set of agents and $\mathcal{P}$ a (finite or countable) set of atomic propositions.
	An \textit{epistemic model} is a triple	$\mathcal{M} = \langle W, \{ R_i \}_{i \in \mathcal{A}}, V \rangle$ where $W \not = \emptyset,\ R_i \subseteq W \times W$ is an accessibility relation (for each $i \in \mathcal{A}$), and $V: \mathcal{P} \rightarrow \wp(W)$ is a valuation function ($\wp(W)$ is the powerset of $W$).
\end{definition}

\noindent
Information of agent $i$ at world $w$ can now be defined as: $R_i(w) = \{v \in W\ | \ wR_iv \}$.
Having a separate (accessibility) relation for each agent enables them to have their own viewpoints.

Next, we introduce the syntax of our base epistemic logic as the set of sentences generated by the following grammar (where $p \in \mathcal{P}$ and $i \in \mathcal{A}$):
\[ \varphi := p \ | \ \neg \varphi \ | \ \varphi \vee \varphi \ | \ K_i \varphi \]

\noindent
We also introduce the abbreviations $\varphi \wedge \psi := \neg(\neg \varphi \vee \neg \psi)$ and $\varphi \rightarrow \psi := \neg \varphi \vee \psi$.

\begin{definition}[\textbf{Truth at world $w$}] \sloppy
	Given an epistemic model $\mathcal{M} = \langle W, \{ R_i \}_{i \in \mathcal{A}}, V \rangle$.
	For each $w \in W$, $\varphi$ \emph{is true at world $w$}, denoted $\mathcal{M}, w \models \varphi$, is defined inductively as follows:

    \vspace{1em}
    \begin{tabular}{lll}
    	$\mathcal{M},w \models p$ & iff & $w \in V(p)$\\
    	$\mathcal{M},w \models \neg \varphi$ & iff & $\mathcal{M}, w  \not \models \varphi$\\
    	$\mathcal{M},w \models \varphi \vee \psi$ & iff & $\mathcal{M},w \models \varphi$ or $\mathcal{M},w \models \psi$\\
    	$\mathcal{M},w \models K_i \varphi$ & iff & for all $v \in W$, if $wR_iv$ then $\mathcal{M},v \models \varphi$
    \end{tabular}
\end{definition}

\noindent	
The formula $K_i\varphi$  expresses that ``Agent $i$ knows $\varphi$''.
This describes knowledge as an all-or-nothing definition.
If we postulate that agent $i$ knows $\varphi$, we say that $\varphi$ is true throughout all worlds in agents $i$'s range of considerations (modeled as reachable worlds).

\textit{Truth} of a formula $\varphi$ for a model $\mathcal{M} = \langle W, \{ R_i \}_{i \in \mathcal{A}}, V \rangle$  and a world $w\in W$ is expressed by writing that $\mathcal{M},w \models \varphi$.
We define $V^\mathcal{M}(\varphi) = \{w \in W \ | \ \mathcal{M},w \models \varphi \}$. Formula $\varphi$ is \textit{valid} if and only if for all $\mathcal{M}$ and for all worlds $w$ we have $\mathcal{M},w \models \varphi$.

Our (multi-)modal logic above -- normal (multi-)modal logic K -- is not yet sufficiently suited to encode epistemic reasoning. Therefore, additional conditions (reflexivity, transitivity and euclideaness) are imposed on the accessibility relations. 
In the remainder of this article we therefore assume the validity of the following S5 principles for the agent's accessibility relations.\footnote{In the \logikey\ approach this is achieved by simply postulating the listed semantical properties.
Alternatively, the syntactic axiom schemata can be postulated in \logikey, and it is also possible to automatically prove the correspondences between them \cite{J21}. Of course, the work we present in this article can be easily adapted to support also  weaker versions of PAL.}

\begin{center}
	\begin{tabular}{llll}
		& \textbf{assumptions} & \textbf{axiom schemata} & \textbf{semantical properties}\\
		\hline
		\textbf{T} & \textit{truth} & $K_i \varphi \rightarrow \varphi$ & reflexive\\
		\textbf{4} & \textit{positive introspection} & $K_i \varphi \rightarrow K_i K_i \varphi$ & transitive\\
		\textbf{5} & \textit{negative introspection} & $\neg K_i \rightarrow K_i \neg K_i \varphi$ & euclidean
	\end{tabular}
\end{center}

We add public announcements \cite{plaza2007logics} to our logic.
The objective is to formulate an operation that informs all agents that some sentence $\varphi$ is true.
By the announcement, all agents will discard all worlds in which $\varphi$ is false, and consider only those worlds in which $\varphi$ is true.
Because of the \textit{publicity} of the announcement all agents are aware of the fact that all other agents know that $\varphi$ is true.

\begin{definition}[\textbf{Public Announcement}]
    \sloppy 
	Suppose that $\mathcal{M} = \langle W, \{R_i\}_{i\in\mathcal{A}}, V \rangle$ is an epistemic model and $\varphi$ is a formula (in the language of our base logic).
	After $\varphi$ is publicly announced, the resulting model is
	$\mathcal{M}^{!\varphi} = \langle W^{!\varphi}, \{R_i^{!\varphi}\}_{i\in\mathcal{A}}, V^{!\varphi} \rangle$ where $W^{!\varphi} = \{ w \in W \ | \ \mathcal{M},w \models \varphi \}$, $R_i^{!\varphi} = R_i \cap (W^{!\varphi} \times W^{!\varphi})$ for all $i \in \mathcal{A}$, and $V^{!\varphi}(p) = V(p) \cap W^{!\varphi}$ for all $p \in \mathcal{P}$.
	
	To say that ``\textit{$\psi$ is true after the announcement of $\varphi$}'' is represented as $[!\varphi]\psi$.
	Truth for this new operator at world $w$ in $\mathcal{M}$ is defined as:
	\[ \mathcal{M}, w \models [!\varphi]\psi \text{ iff } \mathcal{M},w \not\models \varphi \textrm{ or } \mathcal{M}^{!\varphi}, w \models \psi \]
\end{definition}

We conclude this section with the introduction of notions for group knowledge.

Mutual knowledge, often stated as \textit{everyone knows}, describes knowledge that each member of the group holds.
Usually, it is defined for a group of agents $G \subseteq \mathcal{A}$ as $E_G \varphi := \bigwedge_{i \in G}K_i \varphi$.
Equivalently, a new relation can be introduced to express mutual knowledge with the knowledge operator.

\begin{definition}[\textbf{Mutual Knowledge}]
    Let $G \subseteq \mathcal{A}$ be  a group of agents.
	Let $R_G = \bigcup_{i \in G}R_i$.
	The truth clause for mutual knowledge is:
	\[
		\mathcal{M},w \models E_G \psi \text{ iff for all $v \in W$, if } wR_Gv \text{ then } \mathcal{M}, v \models \psi
	\]
\end{definition}

To describe knowledge that is obtained when all agents put their individual knowledge together we introduce \textit{distributed knowledge}.
\begin{definition}[\textbf{Distributed Knowledge}]
    Let $G \subseteq \mathcal{A}$ be  a group of agents.
	Let $R_D = \bigcap_{i \in G}R_i$.
	The truth clause for mutual knowledge is:
	\[
		\mathcal{M},w \models D_G \psi \text{ iff for all $v \in W$, if } wR_Dv \text{ then } \mathcal{M}, v \models \psi
	\]
\end{definition}

Still, there is a distinction to make between \textit{everyone knows} $\varphi$ and \textit{it is common knowledge that} $\varphi$.
A statement $p$ is common knowledge when all agents know $p$, know that they all know $p$, know that they all know that they all know $p$, and so ad infinitum.
Relativized common knowledge was introduced by van Benthem, van Eijck and Kooi \cite{benthem2006} as a variant of common knowledge.
As the name suggests, knowledge update is then treated as a \textit{relativization}.

\begin{definition}[\textbf{Relativized Common Knowledge}]
	Let $G \subseteq \mathcal{A}$ be  a group of agents.
	Let $R_G = \bigcup_{i \in G}R_i$.
	The truth clause for relativized common knowledge is:
	\vspace{-.25em}
	\[\mathcal{M},w \models \mathcal{C}_G(\varphi | \psi) \text{ iff for all $v \in W$, if } w(R_G^\varphi)^+v \text{ then } \mathcal{M}, v \models \psi\]
	where $R_G^\varphi = R_G \cap (W \times $$V^{\mathcal{M}}(\varphi)$), and $(R_G^\varphi)^+$ denotes the transitive closure of $R_G^\varphi$.
\end{definition}

Intuitively, $\mathcal{C}_G(\varphi | \psi)$ expresses that $\psi$ is common knowledge among the agents in group $G$ relative to the information that $\varphi$ is true.
This means, that every path from $w$, that is accessible using the agent's relations through worlds in which $\varphi$ is true, must end in a world in which $\psi$ is true.
Ordinary unconditional common knowledge of $\varphi$ can be abbreviated as $\mathcal{C}_G(\top | \varphi)$, where $\top$ denotes an arbitrary tautology.

In the remainder we use PAL to refer to the depicted logic consisting of modal logic K, extended by the principles T45, public announcement and relativized common knowledge.
The logic PAL, which we employ for the modeling of the wise men puzzle in the remainder, is now given as follows (where $p \in \mathcal{P}$, $i \in \mathcal{A}$, and $G \subseteq \mathcal{A}$):\footnote{The logic can be extended to include mutual knowledge ($E_G$) and distributed knowledge ($D_G$) if needed, and we also cover their embeddings in the remainder.}
\[ \varphi := p \ | \ \neg \varphi \ | \ \varphi \vee \varphi \ | \ K_i \varphi \ | \ \mathcal{C}_G(\varphi | \varphi) \ | \ [!\varphi]\varphi \]

\section{Modeling PAL as a Fragment of HOL}

A shallow semantical embedding (SSE) of a target logic into HOL provides a translation between  the two logics in such a way that the former logic is identified and characterized as a proper fragment of the latter.\footnote{The SSE technique is not be confused with higher-order abstract syntax \cite{DBLP:conf/pldi/PfenningE88}, or with other forms of deep embeddings.}
Once such an SSE is obtained, all that is needed to prove (or refute) conjectures in the target logic is to provide the SSE, encoded in an input file, to the HOL prover in addition to the encoded conjecture.
We can then use the HOL prover as-is, without making any changes to its source code, and use it to solve problems in our target logic.

\subsection{Shallow Semantical Embedding} \label{sec:embedding}
	
To define an SSE for target logic PAL, we lift the type of propositions in order to explicitly encode their dependency on possible worlds; this is analogous to prior work \cite{J41,J44}.
In order to capture the model-changing behavior of PAL, we additionally introduce world domains (sets of worlds) as parameters/arguments in the encoding.
The rationale thereby is to suitably constrain, and recursively pass-on, these domains after each model changing action.

PAL formulas are thus identified in our semantical embedding with certain HOL terms (predicates) of type \mbox{$(i \rightarrow o) \rightarrow i \rightarrow o$}. 
They can be applied to terms of type  \mbox{$i\rightarrow o$}, which are assumed to denote evaluation domains, and subsequently to terms of type $i$, which are assumed to denote possible worlds.
That is, the HOL type $i$ is identified with a (non-empty) set of worlds, and the type \mbox{$i \rightarrow o$}, abbreviated by $\sigma$, is identified with a set of sets of worlds, i.e., a set of evaluation domains.
	
Type \mbox{$(i \rightarrow o) \rightarrow i \rightarrow o$} is abbreviated as $\tau$, $\alpha$ is an abbreviation for \mbox{$i\rightarrow i\rightarrow o$}, the type of accessibility relations between worlds, and $\varrho$ abbreviates \mbox{$\alpha \rightarrow o$}, the type of sets of accessibility relations.
Table 1 provides an overview of these abbreviations.
	
For each propositional symbol $p^i$ of PAL, the associated HOL signature is assumed to contain a corresponding constant symbol $p^i_\sigma$, which is (rigidly) denoting the set of all those worlds in which $p^i$ holds.
We call the $p^i_\sigma$ \textit{$\sigma$-type-lifted propositions}. 
Moreover, for $k = 1, \dots , |\mathcal{A}|$ the HOL signature is assumed to contain the constant symbols $r^1_\alpha, \dots, r^{|\mathcal{A}|}_\alpha$. 
Without loss of generality, we assume that besides those constants symbols and the primitive logical connectives of HOL, no other constant symbols are given in the signature of HOL.

\begin{table}
    \centering
    \begin{tabular}{lll}
        \textbf{meaning} & \textbf{type} & \textbf{abbreviations}\\
        \hline
        HOL formula & $o$ & \\
        worlds & $i$ & \\
        evaluation domains & $i \rightarrow o$ & $\sigma$ \\
        PAL formulas & $(i \rightarrow o) \rightarrow i \rightarrow o$ & $\tau$ \\
        accessibility relations & $i \rightarrow i \rightarrow o$ & $\alpha$ \\
        set of relations & $\alpha \rightarrow o$ & $\varrho$
    \end{tabular}
    \label{table:types}
    \caption{Type abbreviations as used in the remainder}
\end{table}

\begin{definition}[\textbf{Mapping of PAL formulas $\varphi$ into HOL terms $\lfloor\varphi\rfloor$}]
    The mapping $\lfloor \cdot \rfloor$ translates a formula $\varphi$ of PAL into a term $\lfloor \varphi \rfloor$ of HOL of type $\tau$.
    The mapping is defined recursively:
    \begin{align*}
        \lfloor p^j \rfloor &= (^A(p^j_\sigma))_\tau\\
    	\lfloor \neg \varphi \rfloor &= \neg_{\tau\rightarrow\tau} \lfloor \varphi \rfloor\\
    	\lfloor \varphi \vee \psi \rfloor &= \vee_{\tau\rightarrow\tau\rightarrow\tau} \lfloor \varphi \rfloor \lfloor \psi \rfloor\\
    	\lfloor K \ \text{r}^k \ \varphi \rfloor &= K_{\alpha \rightarrow \tau \rightarrow \tau} \ \text{r}^k_{\alpha} \ \lfloor \varphi \rfloor\\
    	\lfloor [! \varphi ] \psi \rfloor &= [! \ \cdot \ ] \cdot_{\tau \rightarrow \tau \rightarrow \tau} \lfloor \varphi \rfloor \lfloor \psi \rfloor\\
    	\lfloor \mathcal{C}_G( \varphi | \psi ) \rfloor &= \mathcal{C}_\cdot( \cdot | \cdot)_{\varrho \rightarrow \tau \rightarrow \tau \rightarrow \tau} G_\varrho \lfloor \varphi \rfloor \lfloor \psi \rfloor
    \end{align*}
    
    A recursive definition is actually not needed in practice. By inspecting the above equations, it becomes clear that only the abbreviations for the logical connectives of PAL are required in combination with a type-lifting for the propositional symbols; cf.~the non-recursive equations of the actual encoding in Lines 28-36 and 46-47 of Fig.~\ref{fig:bild1} in Appendix A.
    	
    Operator $^A(\cdot)$, which evaluates atomic formulas, is defined as follows:
    \begin{align*}
        ^A\cdot_{\sigma\rightarrow\tau} &= \lambda A_\sigma \lambda D_\sigma \lambda X_i (D \ X \wedge A\ X)
    \end{align*}
\end{definition}
		
As a first argument, it accepts a $\sigma$-type-lifted proposition $A_\sigma$, which are rigidly interpreted.
As a second argument, it accepts an evaluation domain $D_\sigma$, that is, an arbitrary subset of the domain associated with type $\sigma$.
And as a third argument, it accepts a current world $X_i$.
It then checks whether (i) the current world is a member of evaluation domain $D_\sigma$ and (ii) whether the $\sigma$-type-lifted proposition $A_\sigma$ holds in the current world.
		
The other logical connectives of PAL, except for $[! \ \cdot \ ] \cdot_{\tau \rightarrow \tau \rightarrow \tau}$,  are now defined in a way so that they simply pass on the evaluation domains as parameters to the atomic-level.
Only $[! \ \cdot \ ] \cdot_{\tau \rightarrow \tau \rightarrow \tau}$ is modifying, in fact, constraining, the evaluation domain it passes on, and it does this in the expected way (cf. Def.~3.3):
\begin{align*}
	\neg_{\tau\rightarrow\tau} &= \lambda A_\tau \lambda D_\sigma \lambda X_i \neg (A \ D \ X) \\ \vee_{\tau\rightarrow\tau\rightarrow\tau} &= \lambda A_\tau \lambda B_\tau \lambda D_\sigma \lambda X_i (A \ D \ X \vee B \ D \ X) \\
	K_{\alpha \rightarrow \tau \rightarrow \tau} &= \lambda R_\alpha \lambda A_\tau \lambda D_\sigma \lambda X_i \forall Y_i ((D\ Y\ \wedge\ R \ X \ Y)\ \longrightarrow A \ D \ Y)\\
	[! \ \cdot \ ] \cdot_{\tau \rightarrow \tau \rightarrow \tau} &= \lambda A_\tau \lambda B_\tau \lambda D_\sigma \lambda X_i ((A \ D \ X) \longrightarrow (B \ (\lambda Y_i\ (D\ Y\ \wedge \ A\ D\ Y))\ X))
\end{align*}

To model $\mathcal{C}_\cdot( \cdot | \cdot)_{\varrho \rightarrow \tau \rightarrow \tau \rightarrow \tau}$ we reuse the following operations on relations; cf.~\cite{J41,J44}.
\begin{align*}
    \texttt{transitive}_{\alpha\rightarrow o} &= \lambda R_\alpha \forall X_i \forall Y_i \forall Z_i ((R\ X\ Y\ \wedge\ R\ Y\ Z)\ \longrightarrow\ R\ X\ Z)\\
    \texttt{intersection}_{\alpha\rightarrow\alpha\rightarrow\alpha} &= \lambda R_\alpha \lambda Q_\alpha \lambda X_i \lambda Y_i (R\ X\ Y\ \wedge\ Q\ X\ Y)\\
    \texttt{sub}_{\alpha\rightarrow\alpha\rightarrow o} &= \lambda R_\alpha \lambda Q_\alpha \forall X_i \forall Y_i (R\ X\ Y\ \longrightarrow\ Q\ X\ Y)
\end{align*}
	
The transitive closure $\texttt{tc}$ of a relation can be elegantly defined in HOL as:
\begin{align*}
    \texttt{tc}_{\alpha\rightarrow\alpha} &= \lambda R_\alpha \lambda X_i \lambda Y_i \forall Q_\alpha (\texttt{transitive}\ Q\ \longrightarrow\ (\texttt{sub}\ R\ Q\ \longrightarrow\ Q\ X\ Y))
\end{align*}

Additionally, we introduce higher-order definitions for the union and intersection of an arbitrary set of relations.
\begin{align*}
    \texttt{big\_union}_{\varrho\rightarrow\alpha} &= \lambda G_{\varrho} \lambda X_i \lambda Y_i \exists R_\alpha (G\ R\ \wedge\ R\ X\ Y)\\
    \texttt{big\_intersection}_{\varrho\rightarrow\alpha} &= \lambda G_{\varrho} \lambda X_i \lambda Y_i \forall R_\alpha (G\ R\ \longrightarrow\ R\ X\ Y)
\end{align*}

\texttt{EVR}, being applied to a set of accessibility relations $G$, returns a relation denoting the mutual knowledge of the set/group of agents $G$. \texttt{EVR} is subsequently used in the definition of relativized common knowledge to describe the mutual knowledge of multiple agents.
Analogously, we introduce distributed knowledge \texttt{DIS} as the intersection of this set.
In the case of three agents, we introduce a concrete set of relations of $R$ consisting of $r^1$, $r^2$ and $r^3$ of type $\alpha$.
\begin{align*}
    \texttt{G}_\varrho &= \lambda R_\alpha (R = r^1\ \vee\ R = r^2\ \vee\ R = r^3)\\
    \texttt{EVR}_\varrho &= \lambda G_\varrho (\texttt{big\_union}\ G)\\
    \texttt{DIS}_\varrho &= \lambda G_\varrho (\texttt{big\_intersection}\ G)
\end{align*}

One could also use a less verbose way of defining $\texttt{G}$ and leverage Isabelle's set notation (and associated theory): $\texttt{G}_{\varrho} = \{ r^1, r^2, r^3 \}$.
However, we here deliberately choose a direct encoding in HOL in order to introduce as little unnecessary dependencies as possible.

The operator $\mathcal{C}_\cdot( \cdot | \cdot)_{\varrho \rightarrow \tau \rightarrow \tau \rightarrow \tau}$ thus abbreviates the following HOL term:
\begin{align*}
    \mathcal{C}_\cdot( \cdot | \cdot)_{\varrho \rightarrow \tau \rightarrow \tau \rightarrow \tau} &= \lambda G_\varrho \lambda A_\tau \lambda B_\tau \lambda D_\sigma \lambda X_i \forall Y_i\\
	&\hspace*{1.5em}(\texttt{tc}\ (\texttt{intersection} \ (\texttt{EVR}\ G)\ (\lambda U_i \lambda V_i (D\ V\ \wedge\ A\ D\ V)))\ X\ Y\\
	&\hspace*{1.75em}\longrightarrow \ B\ D\ Y)   
\end{align*}

Analyzing the truth of a PAL formula $\varphi$, represented by the HOL term $\lfloor \varphi \rfloor$, in a particular domain $d$, represented by the term $D_\sigma$, and a world $s$, represented by the term $S_i$, corresponds to evaluating the application ($\lfloor \varphi \rfloor \ D_\sigma \ S_i$).
We can verify whether $S$ denotes a world in $D$ by checking if $(D\ S)$ is true.
If that is the case, we evaluate $\varphi$ for this domain and world.

A formula $\varphi$ is thus generally valid if and only if for all  $D_\sigma$ and all  $S_i$ we have $D\ S \rightarrow \lfloor \varphi \rfloor D\ S$.
The validity function, therefore, is defined as follows:
\[ \texttt{vld}_{\tau \rightarrow o} = \lambda A_\tau \forall D_\sigma \forall S_i (D\ S\ \longrightarrow\ A \ D\ S). \]

The necessity to quantify over all possible domains in this definition will be further illustrated below.
	
\subsection{Encoding into Isabelle/HOL}	\label{sec:encoding}

What follows is a description of the concrete encoding of the presented SSE of PAL in HOL within the higher-order proof assistant Isabelle/HOL (cf.~Fig.~\ref{fig:bild1} in App.~A).\footnote{The full sources of our encoding can be found at \url{http://logikey.org} in subfolder \texttt{Public-Announcement-Logic}.}

All necessary types can be modeled in a straightforward way.
We declare \texttt{i} to denote possible worlds and then introduce type aliases for $\sigma$, $\tau$, $\alpha$ and $\varrho$.
Type \texttt{bool} represents (the bivalent set of) truth values introduced as $o$ before.

\vspace{.5em}
\lstinline[language=Isabelle]|typedecl i|\hspace*{9.7em}\lstinline|(* Type of possible worlds *)|\\
\hspace*{1.5em}\lstinline[language=Isabelle]|type_synonym $\sigma$ = "i$\Rightarrow$bool"|\hspace*{1.9em}\lstinline|(* Type of world domains *)|\\
\hspace*{1.5em}\lstinline[language=Isabelle]|type_synonym $\tau$ = "$\sigma$$\Rightarrow$i$\Rightarrow$bool" (* Type of world depended formulas *)|\\
\hspace*{1.5em}\lstinline[language=Isabelle]|type_synonym $\alpha$ = "i$\Rightarrow$i$\Rightarrow$bool"|\hspace*{.5em}\lstinline|(* Type of accessibility relations *)|\\
\hspace*{1.5em}\lstinline[language=Isabelle, basicstyle=\small\ttfamily]|type_synonym $\varrho$ = "$\alpha$$\Rightarrow$bool"|\hspace*{1.9em}\lstinline|(* Type of group of agents *)|

\vspace{.5em}
The agents are declared as accessibility relations, and the group of agents can then be denoted by a predicate of type $\varrho$. 
In order to obtain $\mathcal{S}5$ (KT45) properties, we declare respective conditions on the accessibility relations in the group of agents $\texttt{A}$.
Various Isabelle/HOL encodings from \cite{J41,J44} are reused here, including the encoding of transitive closure.

\vspace{.5em}
\lstinline[language=Isabelle, basicstyle=\small\ttfamily]|abbreviation S5Agent::"$\alpha$$\Rightarrow$bool"|\\
\hspace*{3em}\lstinline[language=Isabelle, basicstyle=\small\ttfamily]|where "S5Agent i $\equiv$ reflexive i $\wedge$ transitive i $\wedge$ euclidean i"|\\
\hspace*{1.5em}\lstinline[language=Isabelle, basicstyle=\small\ttfamily]|abbreviation S5Agents::"$\varrho$$\Rightarrow$bool"|\\
\hspace*{3em}\lstinline[language=Isabelle, basicstyle=\small\ttfamily]|where "S5Agents A $\equiv$ $\forall$i. (A i $\longrightarrow$ S5Agent i)"|
\vspace{.5em}

Each of the lifted unary and binary connectives of PAL accepts arguments of type $\tau$, i.e.~lifted PAL formulas, and returns such a lifted PAL formula.

A special case, as discussed before, is the new operator for atomic propositions $^A(\cdot)$.
When evaluating $\sigma$-type lifted atomic propositions $p$ we need to check if $p$ is true in the given world \texttt{w}, but we also need to check whether the given world \texttt{w} is still part of our evaluation domain \texttt{W} that has been recursively passed on.
Operator $^A(\cdot)$ is thus of type \lstinline[language=Isabelle]|"$\sigma \Rightarrow \tau$"|. 
The need and effect of this distinction are addressed again in \S\ref{sec:failures}, where we present formulas that are only valid when restricted to the atomic case.

\vspace{.5em}
\lstinline[language=Isabelle]|abbreviation patom::"$\sigma\Rightarrow\tau$" ("$^\texttt{A}$_")|\\
\hspace*{3em}\lstinline[language=Isabelle]|where "$^\texttt{A}$p $\equiv$ $\lambda$W w. W w $\wedge$ p w"|\\
\hspace*{1.5em}\lstinline[language=Isabelle]|abbreviation ptop::"$\tau$" ("$\boldsymbol{\top}$")|\\
\hspace*{3em}\lstinline[language=Isabelle]|where "$\boldsymbol{\top}$ $\equiv$ $\lambda$W w. True"|\\
\hspace*{1.5em}\lstinline[language=Isabelle]|abbreviation pneg::"$\tau$$\Rightarrow$$\tau$" ("$\boldsymbol{\neg}$")|\\
\hspace*{3em}\lstinline[language=Isabelle]|where "$\boldsymbol{\neg} \varphi$ $\equiv$ $\lambda$W w. $\neg$($\varphi$ W w)"|\\
\hspace*{1.5em}\lstinline[language=Isabelle]|abbreviation pand::"$\tau$$\Rightarrow$$\tau$$\Rightarrow$$\tau$" ("$\boldsymbol{\wedge}$")|\\
\hspace*{3em}\lstinline[language=Isabelle]|where "$\varphi \boldsymbol{\wedge} \psi$ $\equiv$ $\lambda$W w. ($\varphi$ W w) $\wedge$ ($\psi$ W w)"|\\
\hspace*{1.5em}\lstinline[language=Isabelle]|abbreviation por::"$\tau$$\Rightarrow$$\tau$$\Rightarrow$$\tau$" ("$\boldsymbol{\vee}$")|\\
\hspace*{3em}\lstinline[language=Isabelle]|where "$\varphi \boldsymbol{\vee} \psi$ $\equiv$ 	$\lambda$W w. ($\varphi$ W w) $\vee$ ($\psi$ W w)"|\\
\hspace*{1.5em}\lstinline[language=Isabelle]|abbreviation pimp::"$\tau$$\Rightarrow$$\tau$$\Rightarrow$$\tau$" ("$\boldsymbol{\rightarrow}$")|\\
\hspace*{3em}\lstinline[language=Isabelle]|where "$\varphi \boldsymbol{\rightarrow} \psi$ $\equiv$ 	$\lambda$W w. ($\varphi$ W w) $\longrightarrow$ ($\psi$ W w)"|\\	
\hspace*{1.5em}\lstinline[language=Isabelle]|abbreviation pequ::"$\tau$$\Rightarrow$$\tau$$\Rightarrow$$\tau$" ("$\boldsymbol{\leftrightarrow}$")|\\
\hspace*{3em}\lstinline[language=Isabelle]|where "$\varphi \boldsymbol{\leftrightarrow} \psi$ $\equiv$ 	$\lambda$W w. ($\varphi$ W w) $\longleftrightarrow$ ($\psi$ W w)"|

\vspace{.5em}
In the definition of the knowledge operator \texttt{K}, we have to make sure to add a domain check in the implication.

\vspace{.5em}
\lstinline[language=Isabelle]|abbreviation pknow::"$\alpha$$\Rightarrow$$\tau$$\Rightarrow$$\tau$" ("$\textbf{K}$_ _")|\\
\hspace*{3em}\lstinline[language=Isabelle]|where "$\textbf{K}$ r $\varphi$ $\equiv \lambda$W w.$\forall$v. (W v $\wedge$ r w v)  $\longrightarrow$  ($\varphi$ W v)"|

\vspace{.5em}
Some additional abbreviations are introduced to improve readability.
One is a more concise way to state knowledge, achieved by abbreviating \texttt{pknow} even further.

\vspace{.5em}
\lstinline[language=Isabelle, basicstyle=\small\ttfamily]|abbreviation agtknows::"$\alpha$$\Rightarrow$$\tau$$\Rightarrow$$\tau$" ("$\textbf{K}_{\_}$ _")|\\
\hspace*{3em}\lstinline[language=Isabelle]|where "$\textbf{K}_\texttt{r}\, \varphi$ $\equiv \textbf{K}$ r $\varphi$"|
\vspace{.5em}

Additionaly, operators for mutual and distributed knowledge are presented.
To achieve this we introduce two additional encodings on relations. The union and intersection operators on a set of relations.

\vspace{.5em}
\lstinline[language=Isabelle, basicstyle=\small\ttfamily]|definition big_union_rel::"$\varrho$$\Rightarrow$$\alpha$"|\\
\hspace*{3em}\lstinline[language=Isabelle, basicstyle=\small\ttfamily]|where "big_union_rel X $\equiv$ $\lambda$u v. $\exists$R. (X R) $\wedge$ (R u v)"|\\
\hspace*{1.5em}\lstinline[language=Isabelle, basicstyle=\small\ttfamily]|definition big_intersection_rel::"$\varrho$$\Rightarrow$$\alpha$"|\\
\hspace*{3em}\lstinline[language=Isabelle, basicstyle=\small\ttfamily]|where "big_intersection_rel X $\equiv$ $\lambda$u v. $\forall$R. (X R) $\longrightarrow$ (R u v)"|

\vspace{.5em}
These encodings can then be applied to concrete groups of agents $G$ of type $\varrho$, to define the relations (\texttt{EVR G}) and (\texttt{DIS G}).

\vspace{.5em}
\lstinline[language=Isabelle, basicstyle=\small\ttfamily]|abbreviation EVR::"$\varrho$$\Rightarrow$$\alpha$" where "EVR G $\equiv$ big_union_rel G"|\\
\hspace*{1.5em}\lstinline[language=Isabelle, basicstyle=\small\ttfamily]|abbreviation DIS::"$\varrho$$\Rightarrow$$\alpha$" where "DIS G $\equiv$ big_intersection_rel G|
\vspace{.5em}

Now it is straightforward to abbreviate mutual and distributed knowledge.

\vspace{.5em}
\lstinline[language=Isabelle, basicstyle=\small\ttfamily]|abbreviation evrknows::"$\varrho$$\Rightarrow$$\tau$$\Rightarrow$$\tau$" ("$\textbf{E}_{\_}$ _")|\\
\hspace*{3em}\lstinline[language=Isabelle, basicstyle=\small\ttfamily]|where "$\textbf{E}_\texttt{G}\, \varphi \equiv \textbf{K}$ (EVR G) $\varphi$"|\\
\hspace*{1.5em}\lstinline[language=Isabelle, basicstyle=\small\ttfamily]|abbreviation disknows::"$\varrho$$\Rightarrow$$\tau$$\Rightarrow$$\tau$" ("$\textbf{D}_{\_}$ _")|\\
\hspace*{3em}\lstinline[language=Isabelle, basicstyle=\small\ttfamily]|where "$\textbf{D}_\texttt{G}\, \varphi \equiv \textbf{K}$ (DIS G) $\varphi$"|

\vspace{.5em}
We finally see the change of the evaluation domain in action, when introducing the public announcement operator.
We already inserted domain checks in the definition of the operators \texttt{K} and $^A(\cdot)$.
Now, we need to constrain the domain after each public announcement.
So far the evaluation domain, modeled by \texttt{W}, got passed on through all lifted operators without any change.
In the public announcement operator, however, we modify the evaluation domain \texttt{W} into $(\lambda\texttt{z}.\ \texttt{W z}\ \wedge\ \varphi\ \texttt{W z})$ (i.e., the set of all worlds \texttt{z} in \texttt{W}, such that $\varphi$ holds for \texttt{W} and \texttt{z}),
which is then  recursively passed on. 
The public announcement operator is thus defined as:

\vspace{.5em}
\lstinline[language=Isabelle]|abbreviation ppal :: "$\tau$$\Rightarrow$$\tau$$\Rightarrow$$\tau$" ("$\boldsymbol{\text{[}}\boldsymbol{\text{!}}$_$\boldsymbol{\text{]}}$_")|\\
\hspace*{3em}\lstinline[language=Isabelle]|where "$\boldsymbol{\text{[}}\boldsymbol{\text{!}}\varphi\boldsymbol{\text{]}}\psi$ $\equiv$ $\lambda$W w. ($\varphi$ W w) $\longrightarrow$ ($\psi$ ($\lambda$z. W z $\wedge$ $\varphi$ W z) w)"|

\vspace{.5em}
The following embedding of relativized common knowledge is a straightforward encoding of the semantic properties and definitions as proposed in Def.~5.

\vspace{.5em}
\lstinline[language=Isabelle, basicstyle=\small\ttfamily]|abbreviation prck :: "$\varrho$$\Rightarrow$$\tau$$\Rightarrow$$\tau$$\Rightarrow\tau$" ("$\textbf{C}_\_\boldsymbol{\llparenthesis}$_$\boldsymbol{|}$_$\boldsymbol{\rrparenthesis}$")|\\
\hspace*{3em}\lstinline[language=Isabelle, basicstyle=\small\ttfamily]|where "$\textbf{C}_\texttt{G}\boldsymbol{\llparenthesis}\varphi\boldsymbol{|}\psi\boldsymbol{\rrparenthesis}$" $\equiv$ $\lambda$W w. $\forall$v.|\\
 \hspace*{4.5em}\lstinline[language=Isabelle, basicstyle=\small\ttfamily]|(tc (intersection_rel (EVR G) ($\lambda$u v. W v $\wedge\ \varphi$ W w)) w v) $\longrightarrow$ ($\psi$ W v)"|

\vspace{.5em}
As described earlier we can abbreviate ordinary common knowledge as $\mathcal{C}_G(\top | \varphi)$:

\vspace{.5em}
\lstinline[language=Isabelle, basicstyle=\small\ttfamily]|abbreviation pcmn :: "$\varrho$$\Rightarrow$$\tau$$\Rightarrow$$\tau$" ("$\textbf{C}_\_$ _")|\\ \hspace*{3em}\lstinline[language=Isabelle, basicstyle=\small\ttfamily]|where "$\textbf{C}_{\texttt{G}}\ \varphi$ $\equiv$ $\textbf{C}_\texttt{G}\boldsymbol{\llparenthesis}\boldsymbol{\top|}\varphi\boldsymbol{\rrparenthesis}$"|

\vspace{.5em}
Finally, an embedding for the notion of validity is needed.
Generally, for a type-lifted formula $\varphi$ to be valid, the application of $\varphi$ has to hold true for all worlds \texttt{w}.
In the context of PAL the evaluation domains also have to be incorporated in the definition.
Originally, we were tempted to define PAL validity in such a way that we start with a ``full evaluation domain'', a domain that evaluates to \texttt{True} for all possible worlds and gets restricted, whenever necessary after an announcement.
Such a validity definition would look like this:

\vspace{.5em}
\lstinline[language=Isabelle]|abbreviation tvalid::"$\tau$$\Rightarrow$bool" ("$\lfloor$_$\rfloor^\texttt{T}$")|\\
\hspace*{3em}\lstinline[language=Isabelle]|where "$\lfloor$_$\rfloor^\texttt{T}$ $\equiv$ $\forall$w. $\varphi$ ($\lambda$x. True) w"|

\vspace{.5em}
But this leads to undesired behavior, which we can easily see when using our reasoning tools to study e.g.~the validity \textit{announcement necessitation}: $\textit{from}\ \varphi\textit{, infer}\ [!\psi]\varphi$.
If we check for a counterexample in Isabelle/HOL, the model finder Nitpick \cite{Nitpick} reports the following:

\vspace{.5em}
\lstinline[language=Isabelle, basicstyle=\small\ttfamily]|lemma necessitation: assumes "$\lfloor \varphi \rfloor^\texttt{T}$" shows "$\lfloor\boldsymbol{[!}\psi\boldsymbol{]}\varphi\rfloor^\texttt{T}$" nitpick oops|\\
\noindent\rule{\textwidth}{0.4pt}

\begin{lstlisting}[basicstyle=\small\ttfamily]
Nitpick found a counterexample for card i = 2:

Free variables:
$\varphi$ = ($\lambda$x. _)
      ((($\lambda$x. _)($\texttt{i}_1$ := True, $\texttt{i}_2$ := True), $\texttt{i}_1$) := True,
\end{lstlisting}
\vspace{-1em}
\begin{lstlisting}[backgroundcolor=\color{gray!15}, basicstyle=\small\ttfamily]
       (($\lambda$x. _)($\texttt{i}_1$ := True, $\texttt{i}_2$ := True), $\texttt{i}_2$) := True,
\end{lstlisting}
\vspace{-1em}
\begin{lstlisting}[basicstyle=\small\ttfamily]
       (($\lambda$x. _)($\texttt{i}_1$ := True, $\texttt{i}_2$ := False), $\texttt{i}_1$) := False,
       (($\lambda$x. _)($\texttt{i}_1$ := True, $\texttt{i}_2$ := False), $\texttt{i}_2$) := False,
       (($\lambda$x. _)($\texttt{i}_1$ := False, $\texttt{i}_2$ := True), $\texttt{i}_1$) := False,
\end{lstlisting}
\vspace{-1em}
\begin{lstlisting}[backgroundcolor=\color{gray!15}, basicstyle=\small\ttfamily]
       (($\lambda$x. _)($\texttt{i}_1$ := False, $\texttt{i}_2$ := True), $\texttt{i}_2$) := False,
\end{lstlisting}
\vspace{-1em}
\begin{lstlisting}[basicstyle=\small\ttfamily]
       (($\lambda$x. _)($\texttt{i}_1$ := False, $\texttt{i}_2$ := False), $\texttt{i}_1$) := False,
       (($\lambda$x. _)($\texttt{i}_1$ := False, $\texttt{i}_2$ := False), $\texttt{i}_2$) := False)
$\psi$ = ($\lambda$x. _)
\end{lstlisting}
\vspace{-1em}
\begin{lstlisting}[backgroundcolor=\color{gray!15}, basicstyle=\small\ttfamily]
      ((($\lambda$x. _)($\texttt{i}_1$ := True, $\texttt{i}_2$ := True), $\texttt{i}_1$) := False,
       (($\lambda$x. _)($\texttt{i}_1$ := True, $\texttt{i}_2$ := True), $\texttt{i}_2$) := True,
\end{lstlisting}
\vspace{-1em}
\begin{lstlisting}[basicstyle=\small\ttfamily]
       (($\lambda$x. _)($\texttt{i}_1$ := True, $\texttt{i}_2$ := False), $\texttt{i}_1$) := False,
       (($\lambda$x. _)($\texttt{i}_1$ := True, $\texttt{i}_2$ := False), $\texttt{i}_2$) := False,
       (($\lambda$x. _)($\texttt{i}_1$ := False, $\texttt{i}_2$ := True), $\texttt{i}_1$) := False,
       (($\lambda$x. _)($\texttt{i}_1$ := False, $\texttt{i}_2$ := True), $\texttt{i}_2$) := False,
       (($\lambda$x. _)($\texttt{i}_1$ := False, $\texttt{i}_2$ := False), $\texttt{i}_1$) := False,
       (($\lambda$x. _)($\texttt{i}_1$ := False, $\texttt{i}_2$ := False), $\texttt{i}_2$) := False)
Skolem constant:
  w = $\texttt{i}_2$
\end{lstlisting}

Here, for world $\texttt{i}_2$ the term $\lfloor \varphi \rfloor^\texttt{T}$ is true. Because we evaluate
\[
    \texttt{(($\lambda$x.\ \_)($\texttt{i}_1$ := True, $\texttt{i}_2$ := True), $\texttt{i}_2$) := True}
\]

Yet, the evaluation of the term $\lfloor\boldsymbol{[!}\psi\boldsymbol{]}\varphi\rfloor^\texttt{T}$ is false.
This results from announcing $\psi$, where all worlds get discarded in which $\psi$ does not hold.
Such a world is $\texttt{i}_1$, in which $\psi$ does not hold initially (for our notion of validity).
However, in world $\texttt{i}_2$ the formula $\psi$ holds.
We can see this if we have a look at the first two lines as presented above for $\psi$:
\begin{gather*}
    \texttt{(($\lambda$x.\ \_)($\texttt{i}_1$ := True, $\texttt{i}_2$ := True), $\texttt{i}_1$) := False} \\
    \texttt{(($\lambda$x.\ \_)($\texttt{i}_1$ := True, $\texttt{i}_2$ := True), $\texttt{i}_2$) := True\;\;}
\end{gather*}

Thus, ultimately the term $\lfloor\boldsymbol{[!}\psi\boldsymbol{]}\varphi\rfloor^\texttt{T}$ evaluates in the given context, where $i_1$ has been discarded, as follows:
\[
    \texttt{(($\lambda$x.\ \_)($\texttt{i}_1$ := False, $\texttt{i}_2$ := True), $\texttt{i}_2$) := False}
\]

As a consequence, the validity function is defined in such a way that it checks validity not only for all worlds, but for all domains and worlds.
Otherwise, the observed but undesired countermodel to necessitation may occur.

\vspace{.5em}
\lstinline[language=Isabelle]|abbreviation pvalid :: "$\tau$$\Rightarrow$bool" ("$\lfloor$_$\rfloor$")|\\
\hspace*{3em}\lstinline[language=Isabelle]|where "$\lfloor$_$\rfloor$ $\equiv$ $\forall$W.$\forall$w. W w $\longrightarrow$ $\varphi$ W w"|

\vspace{.5em}
The definitions as introduced in this section are intended to be hidden from the user, who can construct formulas directly in PAL syntax.
The unfolding of these definitions is then handled automatically by Isabelle/HOL.

The SSE approach in \logikey\  also supports the encoding and inspection of concrete models. For example, 
let $W=\{w1, w2, w3\} $ and let $p$ be true at $w1$ and $w2$, but false at $w3$.  Let the relations (given as partitions, i.e. lists of equivalence classes) be $R_a = [[w1,w2],[w3]]$ and $R_b = [[w1],[w2,w3]]$.  Then we have $M,w1 \models p \wedge  K_ap \wedge K_bp \wedge \neg K_aK_bp$. In Isabelle/HOL we can encode this as follows:

\vspace{.5em}
\lstinline[language=Isabelle]|$\textcolor{gray}{\text{(* Concrete models can be defined and studied *)}}$|\\
\hspace*{1.5em}\lstinline[language=Isabelle, basicstyle=\small\ttfamily]|lemma assumes: "W = ($\lambda$x. x = w1 $\vee$ x = w2 $\vee$ x = w3)"|\\
\hspace*{9em}\lstinline[language=Isabelle, basicstyle=\small\ttfamily]|"w1 $\not =$ w2" "w1 $\not =$ w3" "w2 $\not =$ w3"|\\
\hspace*{9em}\lstinline[language=Isabelle, basicstyle=\small\ttfamily]|"p W w1" "p W w2" "$\neg$(p W w3)"|\\
\hspace*{9em}\lstinline[language=Isabelle, basicstyle=\small\ttfamily]|"a w1 w1" "a w1 w2" "a w2 w1"|\\
\hspace*{9em}\lstinline[language=Isabelle, basicstyle=\small\ttfamily]|"a w2 w2" "$\neg$(a w1 w3)" "$\neg$(a w3 w1)"|\\
\hspace*{9em}\lstinline[language=Isabelle, basicstyle=\small\ttfamily]|"$\neg$(a w2 w3)" "$\neg$(a w3 w2)" "a w3 w3"|\\
\hspace*{9em}\lstinline[language=Isabelle, basicstyle=\small\ttfamily]|"b w1 w1" "$\neg$(b w1 w2)" "$\neg$(b w2 w1)"|\\
\hspace*{9em}\lstinline[language=Isabelle, basicstyle=\small\ttfamily]|"b w2 w2" "$\neg$(b w1 w3)" "$\neg$(b w3 w1)"|\\
\hspace*{9em}\lstinline[language=Isabelle, basicstyle=\small\ttfamily]|"b w2 w3" "b w3 w2" "b w3 w3"|\\
\hspace*{6em}\lstinline[language=Isabelle, basicstyle=\small\ttfamily]|shows "((p $\boldsymbol{\wedge}$ ($\textbf{K}_\texttt{a}$ p) $\boldsymbol{\wedge}$ ($\textbf{K}_\texttt{b}$ p)) $\boldsymbol{\wedge}$ $\boldsymbol{\neg}$($\textbf{K}_\texttt{a}$ ($\textbf{K}_\texttt{b}$ p))) W w1"|\\
\hspace*{3em}\lstinline[language=Isabelle, basicstyle=\small\ttfamily]|unfolding Defs|\\
\hspace*{4.5em}\lstinline[language=Isabelle, basicstyle=\small\ttfamily]|nitpick[satisfy, atoms=w1 w2 w3] $\textcolor{gray}{\text{(* model *)}}$|\\
\hspace*{4.5em}\lstinline[language=Isabelle, basicstyle=\small\ttfamily]|using assms(1) assms(5) assms(6) assms(7)|\\
\hspace*{7.5em}\lstinline[language=Isabelle, basicstyle=\small\ttfamily]|assms(9) assms(12) assms(21) assms(23) by blast $\textcolor{gray}{\text{(* proof *)}}$|

\vspace{.5em}
Nitpick generates a model that satisfies these constraints, and the provers in Isabelle/HOL subsequently prove the validity of this claim as expected.
We could of course deepen such experiments here and specify and inspect further models in full detail. This is however left for further work. Further work also includes experimentation with Isabelle/HOL as an educational tool, including e.g.~the exploration and study of PAL models in classroom. Generally, the aspect of (counter-)model finding in the SSE approach deserves more attention in future work.

\section{Soundness of the Embedding}

To show that our embedding is sound, we exploit the following mapping of Kripke frames into Henkin models.

\begin{definition}[\textbf{Henkin Model $\mathcal{H}^\mathcal{M}$ for PAL model $\mathcal{M}$}]
	Let $\mathcal{A}$ be a group of agents.
	For any PAL model $\mathcal{M} = \langle W, \{R_i\}_{i \in \mathcal{A}}, V \rangle$, we define a corresponding Henkin model $\mathcal{H}^\mathcal{M}$.
	Thus, let a PAL model $\mathcal{M} = \langle W, \{R_i\}_{i \in \mathcal{A}}, V \rangle$ be given.
	Moreover, assume that $p^1, \dots, p^m \in \mathcal{P}$, for $m \geq 1$, are the only propositional symbols of PAL.
	Remember that our embedding requires the corresponding signature in HOL to provide constant symbols $p^j_\sigma$ such that $\lfloor p^j \rfloor = p^j_\sigma$ for $j = 1, \dots, m$.
    Moreover, for each $R_{i} \in \mathcal{M}$ we require a corresponding constant symbol $r^{i}_\alpha$ in the signature of HOL (this is similar to what we do for the $p^i \in P$).
	A Henkin model $\mathcal{H}^\mathcal{M} = \langle \{\mathcal{D}_\alpha\}_{\alpha \in T}, I \rangle$ for $\mathcal{M}$ is now defined as follows:
	$\mathcal{D}_i$ is chosen as the set of possible worlds $W$;
	all other sets $\mathcal{D}_{\alpha\rightarrow\beta}$ are chosen as (not necessary full) sets of functions from $\mathcal{D}_\alpha$ to $\mathcal{D}_\beta$.
	For all $\mathcal{D}_{\alpha \rightarrow \beta}$ the rule that every term $t_{\alpha \rightarrow \beta}$ must have a denotation in $\mathcal{D}_{\alpha \rightarrow \beta}$ must be obeyed (Denotatpflicht).
	In particular, it is required that $\mathcal{D}_\sigma$ and $\mathcal{D}_\alpha$ contain the elements $Ip^j_\sigma$ and $Ir^{i}_\alpha$, respectively.
	The interpretation function $I$ of $\mathcal{H}^\mathcal{M}$ is defined as follows:
	
	\begin{enumerate}
		\item For $j = 1,\dots,m$, \ $Ip^j_\sigma \in \mathcal{D}_\sigma$ chosen s.t. $Ip^j_\sigma(d, s) = T$ iff $s \in V(p^j)$ in $\mathcal{M}$.
		\item For $k = 1,\dots, |\mathcal{A}|$, $Ir^{i}_\alpha\in \mathcal{D}_\alpha$ chosen s.t. $Ir^{i}_\alpha(s,u) = T$ iff $u \in R_{i}(s)$ in $\mathcal{M}$.
		\item For the logical connectives $\neg, \vee, \Pi$ and $=$ of HOL the interpretation function $I$ is defined as usual.
	\end{enumerate}
	
	None of these choices is in conflict with the necessary requirements of a Henkin model.
\end{definition}

\begin{lemma} \label{lem:lemma}
	Let $\mathcal{H}^\mathcal{M}$ be a Henkin model for a PAL model $\mathcal{M} = \langle W, \{R_i\}_{i \in \mathcal{A}}, V \rangle$. For all PAL formulas $\delta$, arbitrary variable assignments g, sets of worlds  $d=W$ (the \textit{evaluation domains}) and worlds $s$ it holds: $$\mathcal{M},s \models \delta \text{ if and only if } \llbracket \lfloor \delta \rfloor D_\sigma S_i \rrbracket^{\mathcal{H}^\mathcal{M}, g[d/D_\sigma][s/S_i]} = T$$
\end{lemma}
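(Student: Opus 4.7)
The plan is to proceed by structural induction on the PAL formula $\delta$, after strengthening the inductive hypothesis so that it also applies when the HOL domain parameter $D_\sigma$ denotes an \emph{arbitrary} subset $d \subseteq W$ (not only the full domain). Specifically, I would prove the stronger claim: for every $d \subseteq W$ and every $s \in d$,
\[
\mathcal{M}|_d,\, s \models \delta \;\;\Longleftrightarrow\;\; \llbracket \lfloor \delta \rfloor\, D_\sigma\, S_i\rrbracket^{\mathcal{H}^\mathcal{M},\, g[d/D_\sigma][s/S_i]} = T,
\]
where $\mathcal{M}|_d = \langle d,\{R_i\cap(d\times d)\}_{i\in\mathcal{A}}, \lambda p.\,V(p)\cap d\rangle$ denotes the submodel of $\mathcal{M}$ induced by $d$. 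The lemma as stated is the special case $d = W$.

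For the base case $\delta = p^j$, unfolding $\lfloor p^j\rfloor = {}^A(p^j_\sigma)$ yields the HOL term $D_\sigma\,S_i \wedge p^j_\sigma\,S_i$. The first conjunct holds by the assumption $s\in d$; the second, by construction of $I p^j_\sigma$ in $\mathcal{H}^\mathcal{M}$, holds iff $s \in V(p^j)$, matching the semantic clause. The cases $\neg\varphi$ and $\varphi\vee\psi$ follow by unfolding the embedding and directly applying the IH. For $K\,r^i\,\varphi$, the embedding reduces to $\forall Y.\,(D_\sigma Y \wedge r^i_\alpha S_i Y) \to \lfloor\varphi\rfloor D_\sigma Y$; by the denotation of $r^i_\alpha$ fixed in Definition~5.1, the quantifier ranges precisely over $v\in d$ with $s\,R_i\,v$, i.e.\ over the $R_i$-successors of $s$ in $\mathcal{M}|_d$, so IH on $\varphi$ closes the case.

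The crux is the public announcement case $\delta = [!\varphi]\psi$. The embedding unfolds to $(\lfloor\varphi\rfloor D_\sigma S_i) \to (\lfloor\psi\rfloor(\lambda Y.\,D_\sigma Y \wedge \lfloor\varphi\rfloor D_\sigma Y)\,S_i)$. Applying IH to $\varphi$ at $d$ identifies the antecedent with $\mathcal{M}|_d, s\models\varphi$. Applying IH to $\varphi$ pointwise at each world shows that $d' := \{v : D_\sigma v \wedge \lfloor\varphi\rfloor D_\sigma v\}$ equals the post-announcement set $\{v\in d : \mathcal{M}|_d, v\models\varphi\}$. The algebraic identity $(\mathcal{M}|_d)^{!\varphi} = \mathcal{M}|_{d'}$, which is immediate from the componentwise definitions of submodel restriction and public announcement update, then lets us invoke IH on $\psi$ with domain $d'$ to finish the case. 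The relativized common knowledge case $\mathcal{C}_G(\varphi\mid\psi)$ is analogous: by IH on $\varphi$ the HOL-defined relation \verb|intersection (EVR G) ($\lambda$ u v. D v $\wedge$ A D v)| coincides, on pairs with target in $d$, with the semantic relation $R_G^\varphi$ restricted to $\mathcal{M}|_d$; the standard fact (reused from \cite{J41,J44}) that the impredicative HOL definition \texttt{tc} captures the least transitive relation above its argument in any Henkin model gives the match with $(R_G^\varphi)^+$; then IH on $\psi$ closes the case.

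The main obstacle is the public announcement step, because it forces us to keep two conceptually different restrictions in sync: the genuine model transformation on the semantic side and the mere restriction of the HOL parameter $D_\sigma$ on the embedding side. The lemma $(\mathcal{M}|_d)^{!\varphi} = \mathcal{M}|_{d'}$ is precisely what reconciles them and justifies the strengthened IH; verifying it for the valuation and, in particular, for the accessibility relations (where $R_i\cap(d'\times d') = (R_i\cap(d\times d))\cap(d'\times d')$) is the central computation. A secondary subtlety is the RCK case, where care is needed to ensure that chains starting from $s\in d$ under the HOL-restricted relation stay inside $d$, which holds since every step of that relation forces its \emph{target} to lie in $d$.
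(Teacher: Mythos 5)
Your proposal is correct, but it takes a genuinely different route from the paper's proof. The paper runs the induction with the evaluation domain held fixed at $d=W$; as a consequence, in the public announcement case the induction hypothesis does not directly apply to $\lfloor\psi\rfloor$ evaluated at the restricted domain $(\lambda Y.\, D\,Y \wedge \lfloor\varphi\rfloor\, D\,Y)$, and the paper bridges this with an explicitly informal ``Justification'' relating that restricted domain to $\mathcal{M}^{!\varphi}$, conceding that formally a further (analogous) inductive argument on $\psi$ would be required; the relativized common knowledge case is declared technical and omitted. You instead strengthen the induction hypothesis to arbitrary $d\subseteq W$ read against the induced submodel $\mathcal{M}|_d$, so the announcement case closes via the identity $(\mathcal{M}|_d)^{!\varphi}=\mathcal{M}|_{d'}$ and a single application of the IH to $\psi$ at $d'$ --- in effect you carry out the nested induction the paper defers, and your generalized statement is what makes it one uniform induction. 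Two points deserve explicit mention in a write-up: (i) invoking the IH on $\psi$ at $d'$ requires $s\in d'$, which holds precisely in the branch where the antecedent $\lfloor\varphi\rfloor\,D\,S$ is true (the other branch being trivial), so the case split should be spelled out; (ii) your RCK argument (each step of the HOL-restricted relation forces its target into $d$, so chains from $s\in d$ stay in $d$) is more explicit than the paper's, but the appeal to the impredicative \texttt{tc} capturing the genuine transitive closure is not automatic in a Henkin model --- it holds only if $\mathcal{D}_\alpha$ contains enough relations (e.g.\ the closure itself), a condition that should be stipulated for, or verified in, $\mathcal{H}^\mathcal{M}$ rather than imported as a standard fact; the paper's omission of this case leaves the same issue implicit.
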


\begin{proof}
	We start with the case where $\delta$ is $p^j$.
	We have:
	
	\vspace{.5em}
	\noindent
	\hspace*{2.1em}$\llbracket \lfloor p^j \rfloor \ D \ S \rrbracket^{\mathcal{H}^\mathcal{M}, g[d/D_\sigma][s/S_i]} = T$\\
	\hspace*{.6em}$\Leftrightarrow \ \llbracket (^A(p^j_\sigma))_\tau \ D \ S \rrbracket^{\mathcal{H}^\mathcal{M}, g[d/D_\sigma][s/S_i]} = T$\\
	\hspace*{.6em}$\Leftrightarrow \ \llbracket D \ S \wedge p^j_\sigma\ S \rrbracket^{\mathcal{H}^\mathcal{M}, g[d/D_\sigma][s/S_i]} = T$\\
	\hspace*{.6em}$\Leftrightarrow \ s \in d=W \text{ and } Ip^j_\sigma(s) = T \hfill \text{(by definition of $\mathcal{H}^\mathcal{M}$)}$\\
	\hspace*{.6em}$\Leftrightarrow \ M,s \models p^j$
	
	\vspace{1em}
	\noindent
	\textit{Induction hypothesis}: For sentences $\delta$' structurally smaller than $\delta$ we have: For all assignments $g$, domains $d$ and worlds $s, \llbracket \lfloor \delta' \rfloor\ D\ S \rrbracket^{\mathcal{H}^\mathcal{M}, g[d/D_\sigma][s/S_i]} = T$ if and only if $M,s \models \delta'$.
	
	\vspace{.5em}
	\noindent
	We consider each inductive case in turn:
	
	\vspace{1em}
	\noindent
	$\delta =\ \neg \varphi$\\
	\hspace*{.6em}$\Leftrightarrow\ \llbracket \lfloor \neg \varphi \rfloor \ D\ S \rrbracket^{\mathcal{H}^\mathcal{M}, g[d/D_\sigma][s/S_i]} = T$\\
	\hspace*{.6em}$\Leftrightarrow\ \llbracket (\neg_{\tau \rightarrow \tau} \lfloor \varphi \rfloor )\ D \ S \rrbracket^{\mathcal{H}^\mathcal{M}, g[d/D_\sigma][s/S_i]} = T$\\
	\hspace*{.6em}$\Leftrightarrow\ \llbracket \neg (\lfloor \varphi \rfloor\ D \ S) \rrbracket^{\mathcal{H}^\mathcal{M}, g[d/D_\sigma][s/S_i]} = T$\hspace*{5em}$(\text{since } (\neg_{\tau \rightarrow \tau} (\lfloor \varphi \rfloor) \ D \ S) =_{\beta \eta} \neg (\lfloor \varphi \rfloor \ D \ S))$\\
	\hspace*{.6em}$\Leftrightarrow\ \llbracket \lfloor \varphi \rfloor \ D \ S \rrbracket^{\mathcal{H}^\mathcal{M}, g[d/D_\sigma], [s/S_i]} = F$\\
	\hspace*{.6em}$\Leftrightarrow\ M,s \not \models \varphi \hfill (\text{by induction hypothesis})$\\
	\hspace*{.6em}$\Leftrightarrow\ M,s \models \neg \varphi$
	
	\vspace{1em}
	\noindent
	$\delta =\ \varphi \vee \psi$\\
	\hspace*{.6em}$\Leftrightarrow\ \llbracket \lfloor \varphi \vee \psi \rfloor \ D \ S \rrbracket^{\mathcal{H}^\mathcal{M}, g[d/D_\sigma], [s/S_i]} = T$\\
	\hspace*{.6em}$\Leftrightarrow\ \llbracket (\lfloor \varphi \rfloor \vee_{\tau \rightarrow \tau \rightarrow \tau} \lfloor \psi \rfloor) \ D \ S \rrbracket^{\mathcal{H}^\mathcal{M}, g[d/D_\sigma], [s/S_i]} = T$\\
	\hspace*{8.35em}$\hspace{5em}(\text{since } ((\lfloor \varphi \rfloor \vee_{\tau \rightarrow \tau \rightarrow \tau} \lfloor \psi \rfloor) \ D \ S) =_{\beta \eta} (\lfloor \varphi \rfloor \ D \ S) \vee (\lfloor \psi \rfloor) \ D \ S))$\\
	\hspace*{.6em}$\Leftrightarrow\ \llbracket (\lfloor \varphi \rfloor \ D \ S) \vee (\lfloor \psi \rfloor \ D \ S) \rrbracket^{\mathcal{H}^\mathcal{M}, g[d/D_\sigma], [s/S_i]} = T$\\
	\hspace*{.6em}$\Leftrightarrow\ \llbracket \lfloor \varphi \rfloor \ D \ S \rrbracket^{\mathcal{H}^\mathcal{M}, g[d/D_\sigma], g[s/S_i]} = T \text{ or } \ \llbracket \lfloor \psi \rfloor \ D \ S \rrbracket^{\mathcal{H}^\mathcal{M}, g[d/D_\sigma], [s/S_i]} = T$\\
	\hspace*{.6em}$\Leftrightarrow\ M,s \models \varphi \text{ or } M,s \models \psi \hfill (\text{by induction hypothesis})$\\
	\hspace*{.6em}$\Leftrightarrow\ M,s \models \varphi \vee \psi$
	
	\vspace{1em}
	\noindent
	$\delta =\ K\ r^{i}\ \varphi$\\
	\hspace*{.6em}$\Leftrightarrow\ \llbracket \lfloor K\ r^{i}\ \varphi \rfloor \ D \ S \rrbracket^{\mathcal{H}^\mathcal{M}, g[d/D_\sigma][s/S_i]} = T$\\
	\hspace*{.6em}$\Leftrightarrow\ \llbracket K_{\alpha \rightarrow \tau \rightarrow \tau} \ r^{i} \ \lfloor \varphi \rfloor\ D \ S \rrbracket^{\mathcal{H}^\mathcal{M}, g[d/D_\sigma][s/S_i]} = T
	$\\
	\hspace*{.6em}$\Leftrightarrow\ \llbracket \forall Y_i (\neg (D\ Y\ \wedge\ r^{i} \ S \ Y)\ \vee \lfloor \varphi \rfloor \ D \ Y)\rrbracket^{\mathcal{H}^\mathcal{M}, g[d/D_\sigma][s/S_i]} = T$\\
	\hspace*{.6em}$\Leftrightarrow\ \text{For all}\ a \in \mathcal{D}_i \text{ we have } \llbracket (\neg (D\ Y\ \wedge\ r^{i} \ S \ Y)\ \vee \lfloor \varphi \rfloor \ D \ Y) \rrbracket^{\mathcal{H}^\mathcal{M}, g[d/D_\sigma][s/S_i][a/Y_i]} = T$\\
	\hspace*{.6em}$\Leftrightarrow\ \text{For all}\ a \in \mathcal{D}_i \text{ we have } \llbracket \neg (D\ Y\ \wedge\ r^{i} \ S \ Y)\ \rrbracket^{\mathcal{H}^\mathcal{M}, g[d/D_\sigma][s/S_i][a/Y_i]} = T \text{ or}$\\
	\hspace*{2.1em}$\llbracket \lfloor \varphi \rfloor \ D \ Y \rrbracket^{\mathcal{H}^\mathcal{M}, g[d/D_\sigma][s/S_i][a/Y_i]} = T\\$
	\hspace*{.6em}$\Leftrightarrow\ \text{For all}\ a \in \mathcal{D}_i \text{ we have } \llbracket \neg D\ Y\ \vee\ \neg r^{i} \ S \ Y\ \rrbracket^{\mathcal{H}^\mathcal{M}, g[d/D_\sigma][s/S_i][a/Y_i]} = T \text{ or}$\\
	\hspace*{2.1em}$\llbracket \lfloor \varphi \rfloor \ D \ Y \rrbracket^{\mathcal{H}^\mathcal{M}, g[d/D_\sigma][a/Y_i]} = T \hspace*{17em}(S\not\in free(\lfloor \varphi \rfloor))$\\
	\hspace*{.6em}$\Leftrightarrow\ \text{For all}\ a \in \mathcal{D}_i \text{ we have } \llbracket \neg D\ Y\rrbracket^{\mathcal{H}^\mathcal{M}, g[d/D_\sigma][s/S_i][a/Y_i]} = T \text{ or}$\\
	\hspace*{2.1em}$\llbracket \neg r^{i} \ S \ Y\ \rrbracket^{\mathcal{H}^\mathcal{M}, g[d/D_\sigma][s/S_i][a/Y_i]} = T \text{ or } \llbracket \lfloor \varphi \rfloor \ D \ Y \rrbracket^{\mathcal{H}^\mathcal{M}, g[d/D_\sigma][a/Y_i]} = T$\\
	\hspace*{.6em}$\Leftrightarrow\ \text{For all}\ a \in \mathcal{D}_i \text{ we have } \llbracket D\ Y\rrbracket^{\mathcal{H}^\mathcal{M}, g[d/D_\sigma][s/S_i][a/Y_i]} = F \text{ or}$\\
	\hspace*{2.1em}$\llbracket r^{i} \ S \ Y\ \rrbracket^{\mathcal{H}^\mathcal{M}, g[d/D_\sigma][s/S_i][a/Y_i]} = F \text{ or } \llbracket \lfloor \varphi \rfloor \ D \ Y \rrbracket^{\mathcal{H}^\mathcal{M}, g[d/D_\sigma][a/Y_i]} = T$
	\hfill (by ind.~hyp.)\\
	\hspace*{.6em}$\Leftrightarrow\ \text{For all}\ a \in \mathcal{D}_i \text{ we have } (a \not \in d \text{ or } a \not \in r^{i}(s)) \text{ or } M,a \models \varphi$\\
	\hspace*{.6em}$\Leftrightarrow\ \text{For all}\ a \in \mathcal{D}_i \text{ we have } a\not \in (d \cap r^{i}(s)) \text { or } M,a \models \varphi$\\
	\hspace*{.6em}$\Leftrightarrow\ M,s \models K\ r^{i} \varphi$
	
	\vspace{1em}
	\noindent
	$\delta =\ [!\varphi]\psi$\\
	\hspace*{.6em}$\Leftrightarrow\ \llbracket \lfloor [! \varphi ] \psi \rfloor\ D\ S\rrbracket^{\mathcal{H}^\mathcal{M},g[d/D_\sigma][s/S_i]}=T$\\
	\hspace*{.6em}$\Leftrightarrow\ \llbracket ([! \ \cdot \ ] \cdot_{\tau \rightarrow \tau \rightarrow \tau} \lfloor \varphi \rfloor \lfloor \psi \rfloor)\ D\ S\rrbracket^{\mathcal{H}^\mathcal{M},g[d/D_\sigma][s/S_i]}=T$\\
	\hspace*{.6em}$\Leftrightarrow\ \llbracket (\neg \lfloor \varphi \rfloor \ D \ S) \vee (\lfloor \psi \rfloor\ (\lambda Y_i \ D\ Y\ \wedge \lfloor \varphi \rfloor\ D\ Y)\ S)\rrbracket^{\mathcal{H}^\mathcal{M},g[d/D_\sigma][s/S_i]}=T$\\
	\hspace*{.6em}$\Leftrightarrow\ \llbracket (\neg \lfloor\varphi \rfloor \ D \ S)\rrbracket^{\mathcal{H}^\mathcal{M},g[d/D_\sigma][s/S_i]}=T\, \text{or}\, \llbracket(\lfloor \psi \rfloor\ (\lambda Y_i \ D\ Y\ \wedge \lfloor \varphi \rfloor\ D\ Y)\ S)\rrbracket^{\mathcal{H}^\mathcal{M},g[d/D_\sigma][s/S_i]}=T$\\
	\hspace*{.6em}$\Leftrightarrow\ \llbracket (\lfloor \varphi \rfloor \ D \ S)\rrbracket^{\mathcal{H}^\mathcal{M},g[d/D_\sigma][s/S_i]}=F \text{ or } \llbracket(\lfloor \psi \rfloor\ (\lambda Y_i \ D\ Y\ \wedge \lfloor \varphi \rfloor\ D\ Y)\ S)\rrbracket^{\mathcal{H}^\mathcal{M},g[d/D_\sigma][s/S_i]}=T$\\
	\hspace*{.6em}$\Leftrightarrow\ M,s \not \models \varphi \text{ or } \llbracket(\lfloor \psi \rfloor\ (\lambda Y_i \ D\ Y\ \wedge \lfloor \varphi \rfloor\ D\ Y)\ S)\rrbracket^{\mathcal{H}^\mathcal{M},g[d/D_\sigma][s/S_i]}=T$\hfill (by ind.~hyp.) \\
	\hspace*{.6em}$\Leftrightarrow\ M,s \not \models \varphi \text{ or } M^{!\varphi},s \models \psi$\hfill(\textbf{Justification})\\
	\hspace*{.6em}${\Leftrightarrow\ M , s \models [! \varphi ] \psi} $
	\begin{mdframed}
	    \textbf{Justification:}\\
		From the induction hypothesis follows that $\llbracket \lfloor \psi \rfloor\ D\ S \rrbracket^{\mathcal{H}^\mathcal{M}, g[d/D_\sigma][s/S_i]} = T$ if and only if $\mathcal{M},s \models \psi$. In order to see how $\llbracket \lfloor \psi \rfloor\ (\lambda Y_i\ D\ Y\ \wedge \lfloor \varphi \rfloor\ D\ Y)\ S \rrbracket^{\mathcal{H}^\mathcal{M}, g[d/D_\sigma][s/S_i]} = T$ and $M^{!\varphi}, s \models \psi$ relate, we remind ourselves of the definition of the model $\mathcal{M}$ after $\varphi$ is publicly announced:	
		$\mathcal{M}^{!\varphi} = \langle W^{!\varphi}, \{R_i^{!\varphi}\}_{i\in\mathcal{A}}, V^{!\varphi} \rangle$ where $W^{!\varphi} = \{ w \in W \ | \ \mathcal{M},w \models \varphi \}$, 
		$R_i^{!\varphi} = R_i \cap (W^{!\varphi} \times W^{!\varphi})$ for all $i \in \mathcal{A}$, and $V^{!\varphi}(p) = V(p) \cap W^{!\varphi}$ for all $p \in \mathcal{P}$.
		For the embedding this means that $W^{!\varphi}$ retains only worlds in which $\varphi$ is true, while the arrows/relations between worlds remain the same and get evaluated in the embedding as explained.
		By encoding the updated domain as ($\lambda Y_i\ D\ Y\ \wedge \lfloor \varphi \rfloor\ D\ Y$), denoting $\{ w \in W \ | \ \mathcal{M},w \models \varphi \}$ in the given context, we ultimately restrict ourselves to worlds in $W^{!\varphi}$.
	    Relations indirectly get restricted to this new domain ($R_i^{!\varphi} = R_i \cap (W^{!\varphi} \times W^{!\varphi})$), due to the recursively conducted domain checks (see e.g.~the definitions of the public announcement operator), and an analogous argument applies for the evaluation of atomic propositions ($V^{!\varphi}(p) = V(p) \cap W^{!\varphi}$). We thus get: $\llbracket (\lfloor \psi \rfloor \ (\lambda Y_i\ D\ Y\ \wedge \ \lfloor \varphi \rfloor \ D \ Y) \ S) \rrbracket^{\mathcal{H}^\mathcal{M}, g[d/Y_i], [s/S_i]} = T$ if and only if $M^{!\varphi}, s$. Our argument here is informal in order to avoid further technicalities. Formally, another (analogous) inductive argument is required (now on the structure of $\psi$).
	\end{mdframed}
	
	\vspace{1em}
	\noindent
	$\delta\ =\ \mathcal{C}_{G}(\varphi | \psi)$
	
	\noindent
	This case is similar to $K \ r^{i} \ \varphi$.
	The only difference is the construction of the accessibility relation, which now depends on $\varphi$ (and $D$).
	When comparing the definition of relativized common knowledge\footnote{$\mathcal{M},w \models \mathcal{C}^{!\varphi} \psi \text{ iff } (w,v) \in (R_{E_G} \cap (W \times \llbracket \varphi \rrbracket^\mathcal{M}))^+ \text{ implies } \mathcal{M}, v \models \psi$.} with the proposed embedding of $K \ r^{i} \ \varphi$, the analogy becomes apparent;
	the proof is technical and therefore omitted.
\end{proof}

We can now prove the soundness of the embedding.

\begin{theorem}[\textbf{Soundness of the Embedding}]
	\[ \text{If } \models^{\texttt{HOL}} \texttt{vld}(\lfloor \varphi \rfloor) \text{ then } \models^{\texttt{PAL}} \varphi \]
\end{theorem}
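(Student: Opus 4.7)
The plan is to prove the theorem by a direct appeal to Lemma~\ref{lem:lemma}, which already does all the inductive heavy lifting. Concretely, I would argue contrapositively for clarity: assume $\varphi$ is not PAL-valid, so there exist a PAL model $\mathcal{M} = \langle W, \{R_i\}_{i\in\mathcal{A}}, V \rangle$ and a world $w \in W$ with $\mathcal{M}, w \not\models \varphi$; then build the Henkin model $\mathcal{H}^\mathcal{M}$ exactly as in the definition preceding the lemma and derive a HOL counterexample to $\texttt{vld}(\lfloor \varphi \rfloor)$.

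The crucial instantiation is to pick, in $\mathcal{H}^\mathcal{M}$, the evaluation domain $d$ to be the (characteristic function of the) full world set $W$, and the world parameter $s$ to be $w$. Since $\mathcal{D}_i$ is chosen as $W$ in $\mathcal{H}^\mathcal{M}$, the predicate denoting $W$ lies in $\mathcal{D}_\sigma$ by the Denotatpflicht, and the premise $D\ S$ of $\texttt{vld}$ is satisfied at this instance because $w \in W$. Applying Lemma~\ref{lem:lemma} with this specific pair $d = W$, $s = w$ translates the negated PAL-truth into $\llbracket \lfloor \varphi \rfloor\ D_\sigma\ S_i \rrbracket^{\mathcal{H}^\mathcal{M}, g[d/D_\sigma][s/S_i]} = F$ for any assignment $g$. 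Unfolding $\texttt{vld}$ as $\lambda A_\tau \forall D_\sigma \forall S_i (D\ S \longrightarrow A\ D\ S)$, this witnesses $\llbracket \texttt{vld}(\lfloor \varphi \rfloor) \rrbracket^{\mathcal{H}^\mathcal{M}, g} = F$, contradicting the HOL-validity assumption.

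Equivalently, one can run the argument forward: assume $\models^{\texttt{HOL}} \texttt{vld}(\lfloor \varphi \rfloor)$, take arbitrary $\mathcal{M}$ and $w \in W$, instantiate the universal HOL statement at $\mathcal{H}^\mathcal{M}$ with $D_\sigma \mapsto W$ and $S_i \mapsto w$ to obtain $\llbracket \lfloor \varphi \rfloor\ D\ S \rrbracket = T$, and then close the argument by Lemma~\ref{lem:lemma} to conclude $\mathcal{M}, w \models \varphi$. I expect the proof itself to be short (only a few lines), essentially a corollary.

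The only non-routine point, and hence the main thing to get right, is the justification that choosing $d = W$ is legitimate and is the correct match with the lemma, which is stated precisely for $d = W$. This is exactly why the validity function $\texttt{vld}$ was formulated to quantify over all domains and worlds rather than to start from a fixed ``full'' domain (as discussed in \S\ref{sec:encoding}): it ensures that the particular instance needed here, namely $D_\sigma$ interpreted as the characteristic function of the entire world set of the given PAL model, is covered by the HOL-validity assumption. Once this observation is in place, soundness drops out immediately, and no additional induction or case analysis is required beyond what Lemma~\ref{lem:lemma} already provides.
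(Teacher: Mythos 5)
Your proposal is correct and matches the paper's proof: the paper also argues by contraposition, takes a countermodel $\mathcal{M}$ with world $s$, passes to the Henkin model $\mathcal{H}^\mathcal{M}$, instantiates the evaluation domain with $d=W$ and the world with $s$, and applies Lemma~\ref{lem:lemma} to falsify $\texttt{vld}(\lfloor\varphi\rfloor)$ in $\mathcal{H}^\mathcal{M}$. Your explicit remark that $D\,S$ holds because $w\in W$ (so the guarded implication in $\texttt{vld}$ indeed fails) is a point the paper leaves implicit, but otherwise the arguments coincide.
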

\begin{proof}
	The proof is by contraposition.
	Assume $\not \models^{\texttt{PAL}} \varphi$, i.e., there is a PAL model $\mathcal{M} = \langle W, \{R_i\}_{i \in \mathcal{A}}, V\rangle$ and a world $s \in W$, such that $\mathcal{M},s \not \models \varphi$.
	By Lemma \ref{lem:lemma} it holds that $\llbracket \lfloor \varphi \rfloor D_\sigma S_i \rrbracket^{\mathcal{H}^\mathcal{M}, g[d/D_\sigma][s/S_i]} = F$ (for some $g$ and $d = W$) in Henkin model $\mathcal{H}^\mathcal{M} = \langle \{\mathcal{D}_\alpha\}_{\alpha \in T}, I\rangle$ for $M$.
	Now, $\llbracket \lfloor \varphi \rfloor D_\sigma S_i \rrbracket^{\mathcal{H}^\mathcal{M}, g[d/D_\sigma][s/S_i]} = F$  implies that $\llbracket \forall D_\sigma \forall S_i (D\ S\ \longrightarrow\ \lfloor \varphi \rfloor \ D\ S)\rrbracket^{\mathcal{H}^\mathcal{M}, g} = \llbracket \texttt{vld} \lfloor \varphi \rfloor \rrbracket^{\mathcal{H}^\mathcal{M}, g} = F$. Hence, $\mathcal{H}^\mathcal{M} \not \models^\texttt{HOL} \texttt{vld} \lfloor \varphi \rfloor$.
\end{proof}	

The completeness of our embedding of PAL in HOL is addressed in the next chapter. For this, we show that standard axioms and inference rules of PAL can be inferred from our embedding. Except for two axioms (which seem to require induction) all these meta-theoretical proofs were found fully automatically.

\section{Experiments (including Completeness Aspects)}
\subsection{Proving Axioms and Rules of Inference of PAL in HOL}

The presented SSE of PAL is able to prove the following axioms and rules of inference as presented for PAL by Baltag and Renne \cite[Supplement F]{sep-dynamic-epistemic}:

\vspace{1em}
\begin{tabular}{p{10em}l}
    \multicolumn{2}{l}{\textbf{System K}}\\
    -- & All substitutions instances of propositional tautologies\\
    Axiom K & $K_i(\varphi \rightarrow \psi) \rightarrow (K_i \varphi \rightarrow K_i \psi)$\\
    Modus ponens & From $\varphi$ and $\varphi \rightarrow \psi$ infer $\psi$\\
Necessitation & From $\varphi$ infer $K_i \varphi$\\[1em]
\end{tabular}

\begin{tabular}{p{10em}l}
    \multicolumn{2}{l}{\textbf{System $\mathcal{S}5$}}\\
    Axiom T & $K_i \varphi \rightarrow \varphi$\\
    Axiom 4 & $K_i \varphi \rightarrow K_i K_i \varphi$\\
    Axiom 5 & $\neg K_i\noindent \varphi \rightarrow K_i \neg K_i \varphi$\\[1em]
\end{tabular}

\begin{tabular}{p{10em}l}
    \multicolumn{2}{l}{\textbf{Reduction Axioms}}\\
    Atomic Permanence & $[!\varphi]p \leftrightarrow (\varphi \rightarrow p)$\\
    Conjunction & $[!\varphi](\psi \wedge \chi) \leftrightarrow ([!\varphi]\psi \wedge [!\varphi]\chi)$\\
    Partial Functionality & $[!\varphi]\neg \psi \leftrightarrow (\varphi \rightarrow \neg [!\varphi]\psi)$\\
    Action-Knowledge & $[!\varphi]K_i\psi \leftrightarrow (\varphi \rightarrow K_i (\varphi \rightarrow K_i (\varphi \rightarrow [!\varphi]\psi)))$\\
    -- & $[!\varphi]\mathcal{C}(\chi | \psi) \leftrightarrow (\varphi \rightarrow \mathcal{C}(\varphi \wedge [!\varphi]\chi | [!\varphi]\psi))$\\[1em]
\end{tabular}

\begin{tabular}{p{10em}l}
    \multicolumn{2}{l}{\textbf{Axiom schemes for Relativized Common Knowledge (RCK)}}\\
    $\mathcal{C}$-normality & $\mathcal{C}(\chi | (\varphi \rightarrow \psi)) \rightarrow (\mathcal{C}(\chi | \varphi) \rightarrow \mathcal{C}(\chi | \psi))$\\
    Mix axiom & $\mathcal{C}(\psi | \varphi) \leftrightarrow E(\psi \rightarrow (\varphi \wedge \mathcal{C}(\psi | \varphi)))$\\
    Induction axiom & $(E(\psi \rightarrow \varphi) \wedge \mathcal{C}(\psi | \varphi \rightarrow E(\psi \rightarrow \varphi))) \rightarrow \mathcal{C}(\psi | \varphi)$\\[1em]
\end{tabular}

\begin{tabular}{p{10em}l}
    \multicolumn{2}{l}{\textbf{Rules of Inference}}\\
    Announcement Nec. & from $\varphi$, infer $[!\psi]\varphi$\\
    RCK Necessitation & from $\varphi$, infer $\mathcal{C}(\psi | \varphi)$
\end{tabular}
\vspace{1em}

Automatic proofs in Isabelle/HOL can be found for all axioms except for the right-to-left direction of the mix axiom  and the induction axiom schemata for relativized common knowledge (cf.~Fig.~\ref{fig:bild2} in App.~A). 
While for these two open cases full proof automation (with the current tools) fails, some simple edge cases can nevertheless be proved automatically.\footnote{Should induction proofs be needed to prove the general cases, this will lead to interesting further work: How to best handle structural induction over the shallowly embedded PAL formulas, while still avoiding a deep embedding of PAL in HOL?}
However, to ensure completeness of our SSE of PAL in HOL, we can simply postulate the two axiom schemes for induction and mix and postpone proving that they are in fact already entailed. This, in fact,  illustrates another interesting feature with respect to the rapid prototyping of new logical formalisms in the \logikey\ approach.

The consistency of our embedding, resp.~axiomatization, of PAL in Isabelle/HOL (and also of the additional axioms for induction and mix, and for the wise men puzzle) is confirmed by the model finder Nitpick.

\subsection{Exploring Failures of Uniform Substitution} \label{sec:failures}

The following principles are examples of sentences that are \textit{valid} for atomic propositions $p$, but not \textit{schematically valid} for arbitrary formulas $\varphi$ \cite{holliday2013}. 
The results of our experiments are as expected; proofs can be found for the atomic cases $p$. For the schematic formulas $\varphi$, however, countermodels are reported by the model finder Nitpick (cf.~Fig.~\ref{fig:bild2} in App.~A)
\begin{enumerate}
    \setlength\itemsep{.5em}
	\item {$p \rightarrow \neg [!p](\neg p)$}
 	\item $p \rightarrow \neg [!p](\neg K_i p)$
	\item $p \rightarrow \neg [!p](p \wedge \neg K_i p)$
	\item $(p \wedge \neg K_i p) \rightarrow \neg [!p \wedge \neg K_i p] (p \wedge \neg K_i p)$
	\item $K_i p \rightarrow \neg [!p](\neg K_i p)$
	\item $K_i p \rightarrow \neg [!p](p \wedge \neg K_i p)$
\end{enumerate}

As an example, consider the schematic counterpart of (1) for which Nitpick reports the following countermodel:

\vspace{.5em}
\lstinline[language=Isabelle]|lemma "$\boldsymbol{\lfloor}\varphi \boldsymbol{\rightarrow} \boldsymbol{\neg [!}\varphi\boldsymbol{]}(\boldsymbol{\neg}\varphi) \boldsymbol{\rfloor}$" nitpick oops|\\
\noindent\rule{\textwidth}{0.4pt}

\begin{lstlisting}[basicstyle=\small\ttfamily]
Nitpick found a counterexample for card i = 2:

Free variables:
$\varphi$ = ($\lambda$x. _)
      ((($\lambda$x. _)($\texttt{i}_1$ := True, $\texttt{i}_2$ := True), $\texttt{i}_1$) := True,
       (($\lambda$x. _)($\texttt{i}_1$ := True, $\texttt{i}_2$ := True), $\texttt{i}_2$) := False,
       (($\lambda$x. _)($\texttt{i}_1$ := True, $\texttt{i}_2$ := False), $\texttt{i}_1$) := False,
       (($\lambda$x. _)($\texttt{i}_1$ := True, $\texttt{i}_2$ := False), $\texttt{i}_2$) := False,
       (($\lambda$x. _)($\texttt{i}_1$ := False, $\texttt{i}_2$ := True), $\texttt{i}_1$) := False,
       (($\lambda$x. _)($\texttt{i}_1$ := False, $\texttt{i}_2$ := True), $\texttt{i}_2$) := False,
       (($\lambda$x. _)($\texttt{i}_1$ := False, $\texttt{i}_2$ := False), $\texttt{i}_1$) := False,
       (($\lambda$x. _)($\texttt{i}_1$ := False, $\texttt{i}_2$ := False), $\texttt{i}_2$) := False)
Skolem constant:
  W = ($\lambda$x. _)($\texttt{i}_1$ := True, $\texttt{i}_2$ := True)
  w = $\texttt{i}_1$
\end{lstlisting}

The explanation for this model is similar to the one presented in \S\ref{sec:encoding}.
This output is expected, given that (1) has been shown not to be \emph{schematically} valid \cite{van2012everything}.

\subsection{Example Application: The Wise Men Puzzle}

The wise men puzzle is an interesting riddle in epistemic reasoning.
It is well suited to demonstrate epistemic actions in a multi-agent scenario.
Baldoni~\cite{baldoni1998normal} gave a formalization for this, which later got embedded into Isabelle/HOL by Benzmüller~\cite{J41,J44}.
In the following implementation, these results will be used as a stepping stone. Note that below we are not going to define a specific model for the wise men,  but instead we analyze the puzzle using the semantic consequence relation.

First, the riddle is recited, and then we go into detail on how the uncertainties change.\footnote{A very similar riddle, that is often presented in the literature is the \emph{Muddy Children} puzzle \cite{DBLP:journals/apal/FaginHMV99}.
A difference between these two riddles is that in the version presented here, the agents get asked sequentially, not synchronously.}

\begin{quote}\it \small
	Once upon a time, a king wanted to find the wisest out of his three wisest men.
	He arranged them in a circle so that they can see and hear each other and told them that he would put a white or a black spot on their foreheads and that one of the three spots would certainly be white.
	The three wise men could see and hear each other but, of course, they could not see their faces reflected anywhere.
	The king, then, asked each of them [sequentially] to find out the color of his own spot.
	After a while, the wisest correctly answered that his spot was white.
\end{quote}

\noindent
The already existing encoding by Benzm\"uller puts a particular emphasis on the adequate modeling of common knowledge.
In \cite{C90}, this solution was enhanced by the public announcement operator.
Consequently, common knowledge was no longer statically stated after each iteration, but a dynamic approach was used for this.
Here, the modeling of this riddle has been further improved (e.g., by better parameterizing our notions over groups of agents) and we are automating the puzzle now for four agents instead of three.

Before we can evaluate the knowledge of the first wise man, we need to formulate the initial circumstances and background knowledge.
Let \texttt{a}, \texttt{b}, \texttt{c} and \texttt{d} be the wise men (they are being encoded as relations of type $\alpha$).
It is common knowledge, that each wise man can see the foreheads of the other wise men.
The only doubt a wise man has, is whether he has a white spot on his own forehead or not.
Additionally, it is common knowledge that at least one of the four wise men has a white spot on his forehead.
The rules of the riddle are encoded as follows:

\begin{lstlisting}[frame=None, basicstyle=\small\ttfamily]
$\textcolor{gray}{\text{(* Agents modeled as accessibility relations *)}}$
consts a::"$\alpha$" b::"$\alpha$" c::"$\alpha$" d::"$\alpha$"
abbreviation Agent::"$\sigma$$\Rightarrow$bool" ("$\mathcal{A}$")
  where "$\mathcal{A}$ x $\equiv$ x = a $\boldsymbol{\vee}$ x = b $\boldsymbol{\vee}$ x = c $\boldsymbol{\vee}$ x = d"
axiomatization where group_S5: "S5Agents $\mathcal{A}$"

$\textcolor{gray}{\text{(* Common knowledge: at least one of a, b, c and d has a white spot *)}}$
consts ws::"$\alpha\Rightarrow\sigma$"
axiomatization where WM1: "$\lfloor \textbf{C}_\mathcal{A} \ (^\text{A}\text{ws a} \ \boldsymbol{\vee} \ ^\text{A}\text{ws b} \ \boldsymbol{\vee} \ ^\text{A}\text{ws c} \ \boldsymbol{\vee} \ ^\text{A}\text{ws d}) \rfloor$"
axiomatization where
  $\textcolor{gray}{\text{(* Common knowledge: if x has not a white spot then y know this *)}}$
  WM2ab: "$\lfloor \textbf{C}_\mathcal{A} \ (\boldsymbol{\neg} (^\text{A}\text{ws a}) \boldsymbol{\rightarrow} \textbf{K}_\text{b} (\boldsymbol{\neg} (^\text{A}\text{ws a}) )) \rfloor$" and
  WM2ac: "$\lfloor \textbf{C}_\mathcal{A} \ (\boldsymbol{\neg} (^\text{A}\text{ws a}) \boldsymbol{\rightarrow} \textbf{K}_\text{c} (\boldsymbol{\neg} (^\text{A}\text{ws a}) )) \rfloor$" and
  WM2ad: "$\lfloor \textbf{C}_\mathcal{A} \ (\boldsymbol{\neg} (^\text{A}\text{ws a}) \boldsymbol{\rightarrow} \textbf{K}_\text{d} (\boldsymbol{\neg} (^\text{A}\text{ws a}) )) \rfloor$" and
  WM2ba: "$\lfloor \textbf{C}_\mathcal{A} \ (\boldsymbol{\neg} (^\text{A}\text{ws b}) \boldsymbol{\rightarrow} \textbf{K}_\text{a} (\boldsymbol{\neg} (^\text{A}\text{ws b}) )) \rfloor$" and
  WM2bc: "$\lfloor \textbf{C}_\mathcal{A} \ (\boldsymbol{\neg} (^\text{A}\text{ws b}) \boldsymbol{\rightarrow} \textbf{K}_\text{c} (\boldsymbol{\neg} (^\text{A}\text{ws b}) )) \rfloor$" and
  WM2bd: "$\lfloor \textbf{C}_\mathcal{A} \ (\boldsymbol{\neg} (^\text{A}\text{ws b}) \boldsymbol{\rightarrow} \textbf{K}_\text{d} (\boldsymbol{\neg} (^\text{A}\text{ws b}) )) \rfloor$" and
  WM2ca: "$\lfloor \textbf{C}_\mathcal{A} \ (\boldsymbol{\neg} (^\text{A}\text{ws c}) \boldsymbol{\rightarrow} \textbf{K}_\text{a} (\boldsymbol{\neg} (^\text{A}\text{ws c}) )) \rfloor$" and
  WM2cb: "$\lfloor \textbf{C}_\mathcal{A} \ (\boldsymbol{\neg} (^\text{A}\text{ws c}) \boldsymbol{\rightarrow} \textbf{K}_\text{b} (\boldsymbol{\neg} (^\text{A}\text{ws c}) )) \rfloor$" and
  WM2cd: "$\lfloor \textbf{C}_\mathcal{A} \ (\boldsymbol{\neg} (^\text{A}\text{ws c}) \boldsymbol{\rightarrow} \textbf{K}_\text{d} (\boldsymbol{\neg} (^\text{A}\text{ws c}) )) \rfloor$" and
  WM2da: "$\lfloor \textbf{C}_\mathcal{A} \ (\boldsymbol{\neg} (^\text{A}\text{ws d}) \boldsymbol{\rightarrow} \textbf{K}_\text{a} (\boldsymbol{\neg} (^\text{A}\text{ws d}) )) \rfloor$" and
  WM2db: "$\lfloor \textbf{C}_\mathcal{A} \ (\boldsymbol{\neg} (^\text{A}\text{ws d}) \boldsymbol{\rightarrow} \textbf{K}_\text{b} (\boldsymbol{\neg} (^\text{A}\text{ws d}) )) \rfloor$" and
  WM2dc: "$\lfloor \textbf{C}_\mathcal{A} \ (\boldsymbol{\neg} (^\text{A}\text{ws d}) \boldsymbol{\rightarrow} \textbf{K}_\text{c} (\boldsymbol{\neg} (^\text{A}\text{ws d}) )) \rfloor$"
\end{lstlisting}

The positive counterparts $\texttt{\textbf{C}}_\mathcal{A} \ ((^\text{A}\texttt{ws x}) \boldsymbol{\rightarrow} \texttt{\textbf{K}}_\texttt{y} (^\text{A}\texttt{ws x}))$ for \texttt{x}, \texttt{y} $\in \mathcal{A}$  of the above negative axioms are implied; this is quickly  confirmed by the automated proof tools in Isabelle/HOL. For example, we have (where $\texttt{\textbf{group\_S5}}$
is referring to the S5 properties of the epistemic operators $\texttt{\textbf{K}}_\texttt{y}$):

\begin{lstlisting}[frame=None, basicstyle=\small\ttfamily]
lemma WM2ab$\text{'}$: "$\lfloor \textbf{C}_\mathcal{A} \ ((^\text{A}\text{ws a}) \boldsymbol{\rightarrow} \textbf{K}_\text{b} (^\text{A}\text{ws a})) \rfloor$"
  using WM2ab group_S5 unfolding Defs by (smt (z3))
\end{lstlisting}

\noindent
Now the king asks whether the first wise man, say $a$, knows if he has a white spot or not.
Assume that $a$ publicly answers that he does not. This is a public announcement of the form: $\neg (\texttt{K}_\texttt{a} (^\texttt{A}\texttt{ws a}) \vee (\texttt{K}_\texttt{a} \neg (^\texttt{A}\texttt{ws a})))$.
Again, a wise man gets asked by the king whether he knows if he has a white spot or not.
Now it's $b$'s turn, and assume that $b$ also announces that he does not know whether he has a white spot on his forehead.\footnote{The case where neither $a$ nor $b$ can correctly infer the color of their forehead when being asked by the king is the most challenging case; we only discuss this one here.} 
The third wise man, $c$, is also unable to tell whether he has a white spot or not.

When asked, $d$ is able to give the right answer, namely that he has a white spot on his forehead.
We can prove this automatically in Isabelle/HOL:\footnote{The experiments have been carried out using Isabelle 2021 on a Lenovo ThinkPad T480s with Intel\textregistered Core i7-8550U QuadCore@1.8Ghz and 16GB RAM.}

\begin{lstlisting}[frame=None, basicstyle=\small\ttfamily]
theorem whitespot_d: "$\lfloor \boldsymbol{[!}\boldsymbol{\neg} \textbf{K}_\text{a} (^\text{A}\text{ws a})\boldsymbol{]}(\boldsymbol{[!}\boldsymbol{\neg} \textbf{K}_\text{b} (^\text{A}\text{ws b})\boldsymbol{]}(\boldsymbol{[!}\boldsymbol{\neg} \textbf{K}_\text{c} (^\text{A}\text{ws c})\boldsymbol{]}(\textbf{K}_\text{d} (^\text{A}\text{ws d}))))\rfloor$"
  using WM1 WM2ba WM2ca WM2cb WM2da WM2db WM2dc
  unfolding Defs by (smt (verit))
\end{lstlisting}

Alternatively we e.g.~get:

\begin{lstlisting}[frame=None, basicstyle=\small\ttfamily]
theorem whitespot_d$\text{'}$:
  "$\lfloor \boldsymbol{[!}\boldsymbol{\neg} ((\textbf{K}_\text{a} (^\text{A}\text{ws a})) \boldsymbol{\vee} (\textbf{K}_\text{a} (\boldsymbol{\neg} ^\text{A}\text{ws a})))\boldsymbol{]}(\boldsymbol{[!}\boldsymbol{\neg} ((\textbf{K}_\text{b} (^\text{A}\text{ws b})) \boldsymbol{\vee} (\textbf{K}_\text{b} (\boldsymbol{\neg} ^\text{A}\text{ws b})))\boldsymbol{]}($
     $\boldsymbol{[!}\boldsymbol{\neg} ((\textbf{K}_\text{c} (^\text{A}\text{ws c})) \boldsymbol{\vee} (\textbf{K}_\text{c} (\boldsymbol{\neg} ^\text{A}\text{ws c})))\boldsymbol{]}(\textbf{K}_\text{d} (^\text{A}\text{ws d}))))\rfloor$"
  using whitespot_c
  unfolding Defs sledgehammer[verbose]() $\textcolor{gray}{\text{(* finds proof *)}}$
  $\textcolor{gray}{\text{(* reconstruction timeout *)}}$
\end{lstlisting}

\section{Comparison with Related Work}
In related work \cite{BenthemEGS18}, van Benthem, van Eijck and colleagues have studied a  \textit{``faithful representation of DEL [dynamic epistemic logic] models as so-called knowledge structures that allow for symbolic model checking''.}
The authors show that such an approach enables efficient and effective reasoning in epistemic scenarios with state-of-the-art Binary Decision Diagram (BDD) reasoning technology, outperforming other existing methods \cite{DEMO,DEMO-S5} to automate DEL reasoning.
Further related work \cite{van2006model} demonstrates how dynamic epistemic terms can be formalized in temporal epistemic terms to apply the model checkers MCK \cite{gammie2004mck} or MCMAS \cite{raimondi2004verification}.
Our approach differs in various respects, including: 

\begin{description}
    \item[External vs.~internal representation transformation:]
    Instead of writing external (e.g.~Haskell-)code to realize the required conversions from DEL into Boolean representations, we work with logic-internal conversions into HOL, provided in the form of a set of equations stated in HOL itself (thereby heavily exploiting the virtues of $\lambda$-conversion).
    Our encoding is concise (only about 50 lines in Isabelle/HOL) and human readable.

    \item[Meta-logical reasoning:]
    \sloppy
    Since our conversion ``code'' is provided within the \mbox{(meta-)}logic environment itself, the conversion becomes better controllable and even amenable to formal verification.
    Moreover, as we have also demonstrated in this article, meta-logical studies about the embedded logics and their embedding in HOL are well-supported in our approach.

    \item[Scalability beyond propositional reasoning:]
    Real-world applications often require differentiation between entities/individuals, their properties and functions defined on them.
    Moreover, quantification over entities (or properties and functions) supports generic statements that are not supported in propositional DEL.
    In contrast to the related work, the shallow semantical embedding approach very naturally scales for first-order and  higher-order extensions of the embedded logics; for more details on this we refer to \cite{J41,J44} and the references therein.

    \item[Reuse of automated theorem proving and model finding technology:]
    Both the related work and our approach reuse state-of-the-art automated reasoning technology.
    In our case, this includes world-leading first-order and higher-order theorem provers and model finders already integrated with Isabelle/HOL \cite{sledgehammer}.
    These tools in turn internally collaborate with the latest SMT and SAT solving technology.
    The burden to organize and orchestrate the technical communication with and between these tools is taken away from us by reuse of respective solutions as already provided in Isabelle/HOL (and recursively also within the integrated theorem provers).
    Well established and robustly supported language formats (e.g.~TPTP syntax, \url{http://www.tptp.org}) are reused in these nested transformations. These cascades of already supported logic transformations are one reason why our embedding approach readily scales for automating reasoning beyond just propositional DEL. 
\end{description}

We are convinced, for reasons as discussed above, that our approach is particularly well suited for the exploration and rapid prototyping of new logics (and logic combinations) and their embeddings in HOL, and for the study of their meta-logical properties, in particular, when it comes to first-order and higher-order extensions of DEL. 
At the same time, we share with the related work by van Benthem, van Eijck and colleagues  a deep interest in practical (object-level) applications, and therefore practical reasoning performance is obviously also of high relevance.
In this regard, however, we naturally assume a performance loss in comparison to hand-crafted, specialist solutions.
Previous studies in the context of first-order modal logic theorem proving nevertheless have shown that this is not always the case \cite{C62}.

\section{Conclusion}
A shallow semantical embedding of public announcement logic with relativized common knowledge in classical higher-order logic has been presented. Our implementation of this embedding in Isabelle/HOL delivers promising initial results, as evidenced by the effective automation of the prominent wise men puzzle. 
In particular, we have shown how model-changing behavior can be adequately and elegantly addressed in our embedding approach.
With reference to uniform substitution, we saw that our embedding enables the study of meta-logical properties of public  announcement logic, and object-level reasoning has been demonstrated by a first-time automation of the wise men puzzle encoded in public announcement logic with a relativized common knowledge operator.

\section*{Acknowledgments}
We thank the anonymous reviewers of this article for their valuable feedback and comments that helped us improve this article. We also thank the reviewers of our related earlier paper presented at the 3rd DaL\'{i} Workshop on Dynamic Logic.

\bibliographystyle{abbrvdin}
\bibliography{bibliography}

\pagebreak
\begin{appendix}
\section{Isabelle/HOL sources}
The sources of our modeling and experiments in Isabelle/HOL are presented in Figs.~\ref{fig:bild1}-\ref{fig:bild4}.

\begin{figure}[!htb]
    \centering
    \colorbox{gray!30}{\includegraphics[width=.93\textwidth]{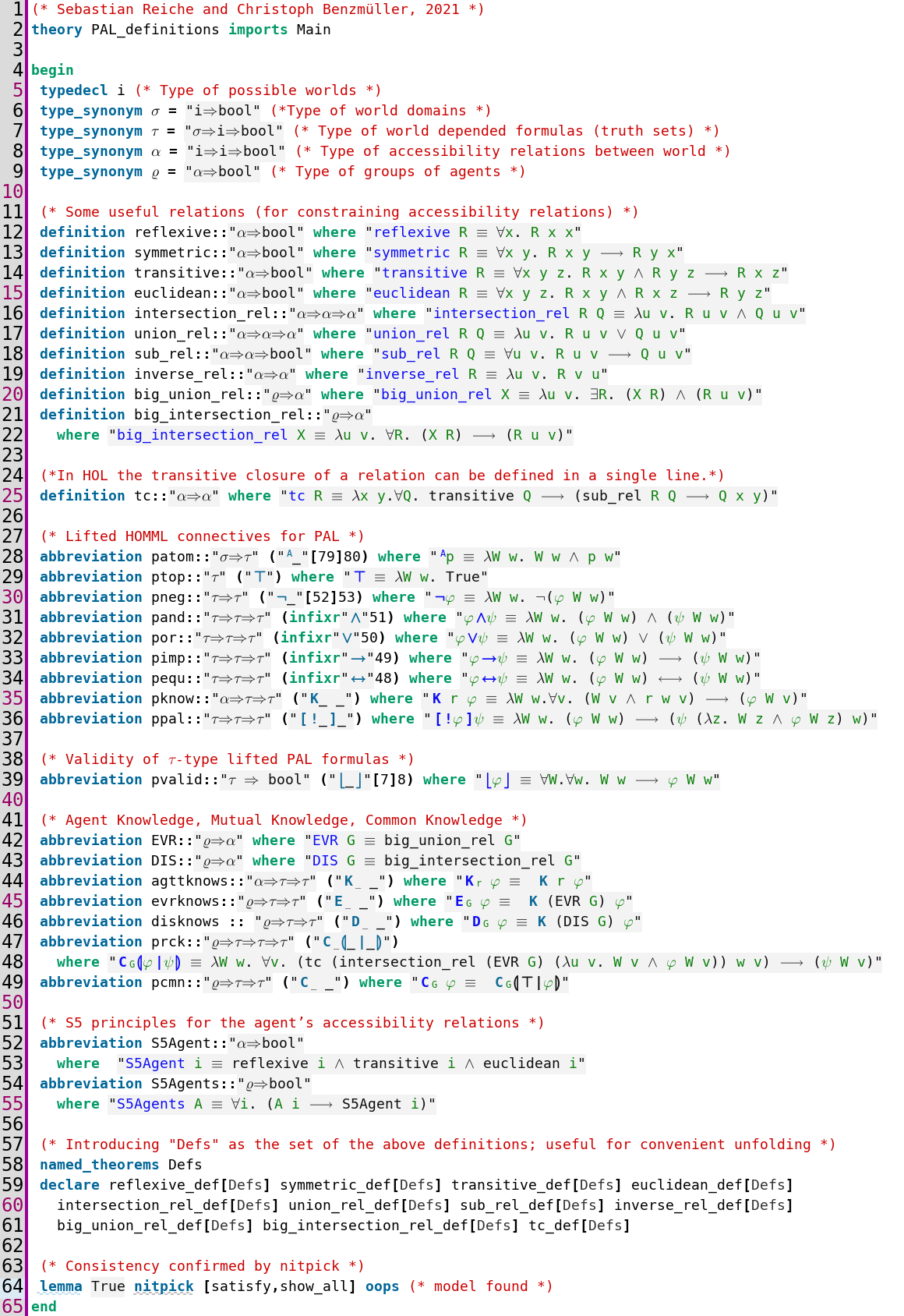}}
    \caption{Embedding of PAL in HOL}
\label{fig:bild1}
\end{figure}
  
\begin{figure}[!htb]
    \centering
    \colorbox{gray!30}{\includegraphics[width=.95\textwidth]{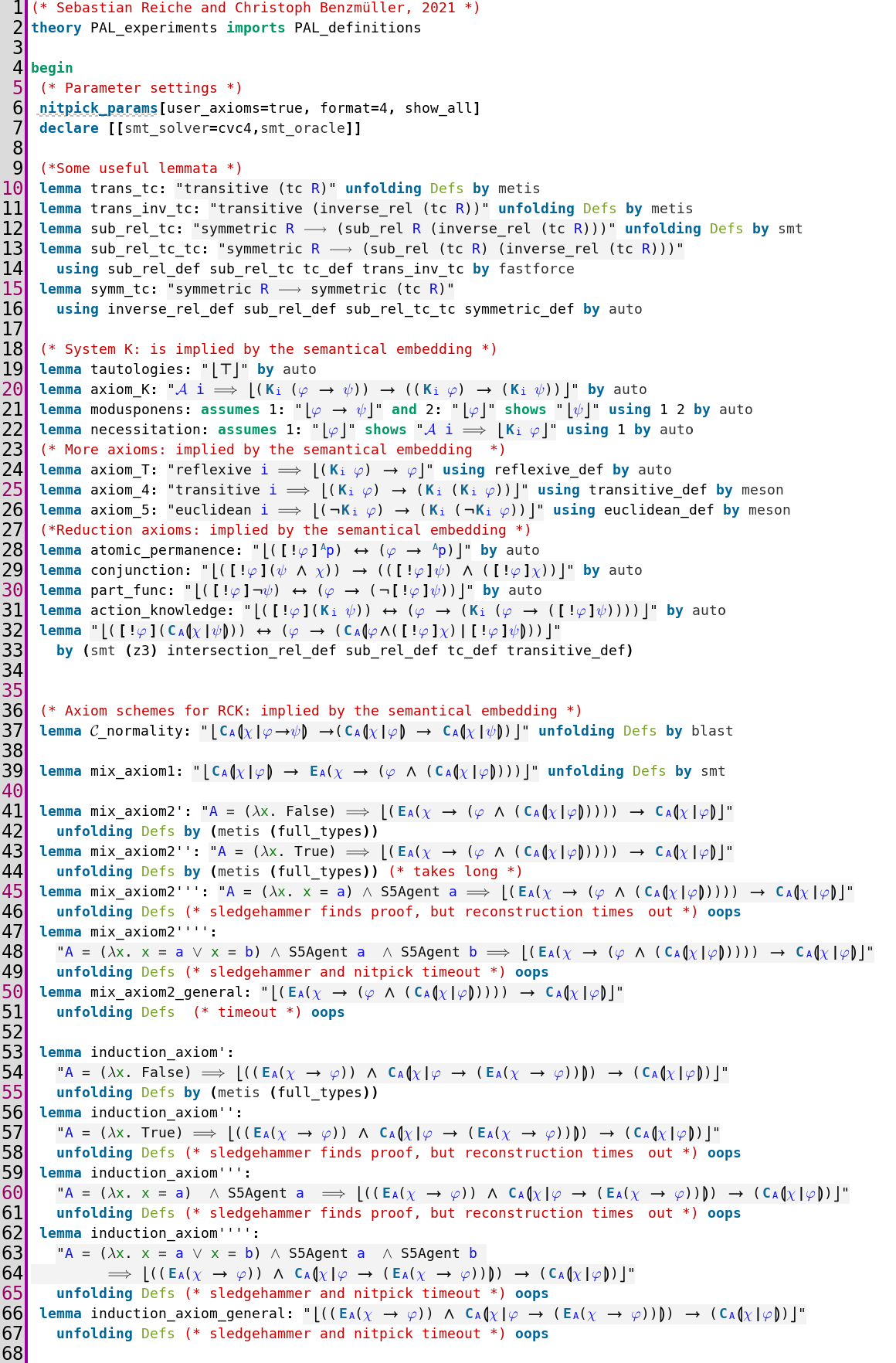}}
    \caption{Testing the automation of PAL in HOL}
    \label{fig:bild2}
\end{figure}
  
\begin{figure}[!htb]
    \centering
    \colorbox{gray!30}{\includegraphics[width=.95\textwidth]{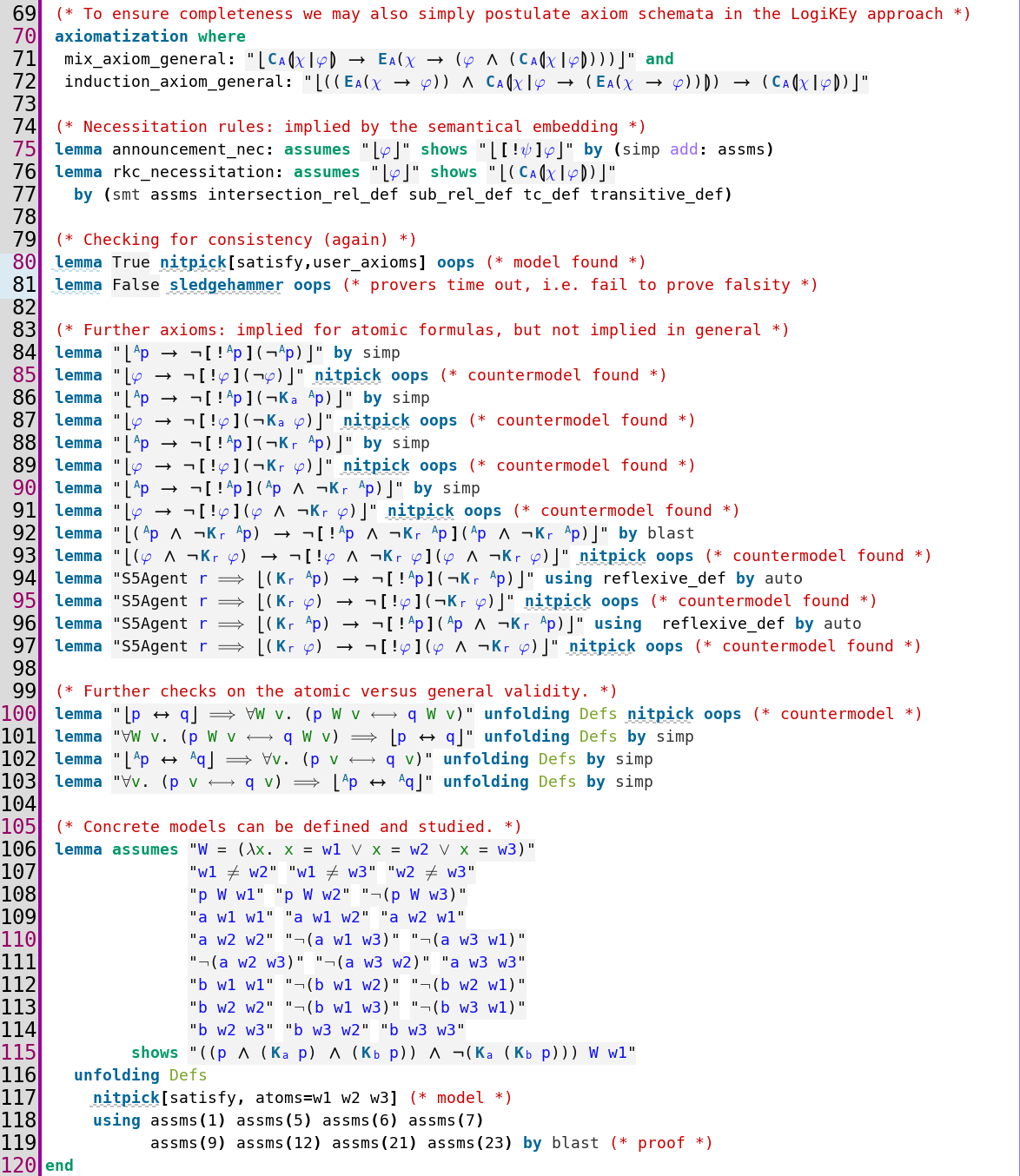}}
    \caption{Testing the automation of PAL in HOL (contd.)}
    \label{fig:bild3}
\end{figure}

\begin{figure}[!htb]
    \centering
    \colorbox{gray!30}{\includegraphics[width=.95\textwidth]{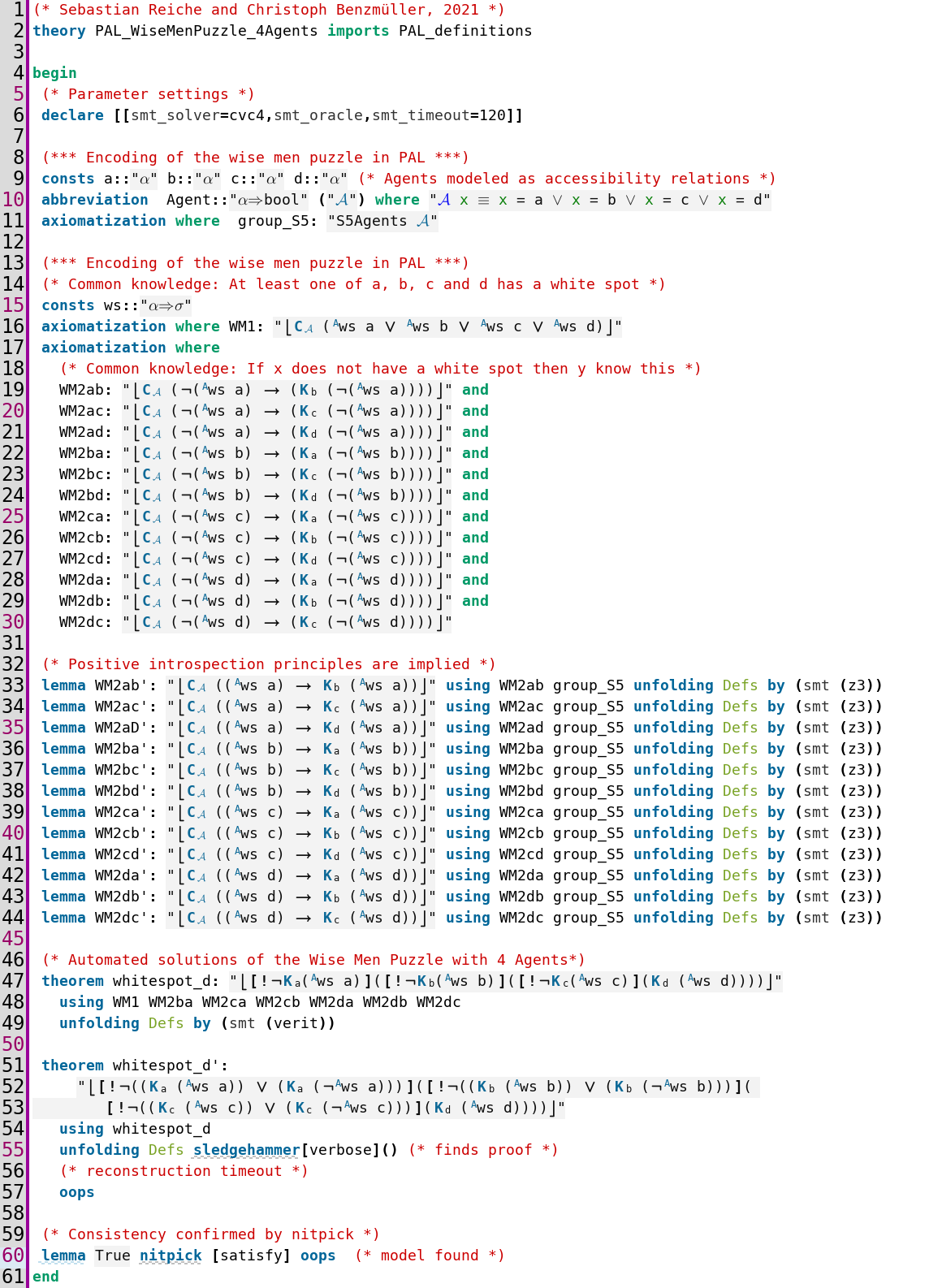}}
    \caption{Modeling and automating the wise men puzzle with four agents}
    \label{fig:bild4}
\end{figure}
  
\end{appendix}

\end{document}